\documentclass{ecai}
\usepackage{times}
\usepackage{graphicx}
\usepackage{latexsym}

\ecaisubmission   

\usepackage{amsmath}
\usepackage{amssymb}
\usepackage{amsthm}
\usepackage{framed}
\usepackage{graphicx}
\usepackage{color}
\usepackage{algorithm}
\usepackage[noend]{algorithmic}
\usepackage{url}

\usepackage{pifont}
\newcommand{\cmark}{\ding{51}}%
\newcommand{\xmark}{\ding{55}}%

\def\comment(#1){{\textcolor{blue}{\textbf{#1}}}}
\def\eql(#1,#2){{#1\!\!=\!#2}}

\newtheorem{thm}{Theorem}
\newtheorem{lemma}{Lemma}

\newtheorem{proposition}{Proposition}
\newtheorem{definition}{Definition}

\newcommand\shrink[1]{}

\def\n(#1){\bar{#1}}

\def\a{{\bf a}}

\def\X{{\bf X}}

\def\eql(#1,#2){{#1\!\!=\!#2}}

\def\eql(#1,#2){{#1\!=\!#2}}


\def\clap#1{\hbox to 0pt{\hss#1\hss}}

\newcommand\Amain[1]{
\textbf{main:}
#1
\vspace{1mm}
}

\def\Ca{{${\cal C}_1$}}
\def\Cb{{${\cal C}_2$}}
\def\Cd{{${\cal C}_3$}}

\def\n{{\sigma}}
\def\nn{{\overline{\sigma}}}
\def\t{{\tau}}
\def\nt{{\overline{\tau}}}
\def\a{{\rho}}
\def\na{{\overline{\rho}}}

\def\R{{\cal R}}
\def\cons{{\sf consensus}}
\def\filter{{\sf filter}}
\def\reason{{\sf reason}}

\def\ctd{{\sc c2d}}
\def\mctd{{\sc mini\_c2d}}
\def\df{{\sc d4}}

\begin{document}

\title{On The Reasons Behind Decisions\footnote{Will appear in proceedings of the European Conference on Artificial Intelligence (ECAI), Spain 2020.}}
\author{Adnan Darwiche\institute{University of California, Los Angeles,
email: darwiche@cs.ucla.edu}
 \and Auguste Hirth\institute{University of California, Los Angeles,
email: ahirth@cs.ucla.edu}}

\maketitle
\bibliographystyle{ecai}

\begin{abstract}
Recent work has shown that some common machine learning classifiers can be compiled into
Boolean circuits that have the same input-output behavior.
We present a theory for unveiling the {\em reasons} behind the decisions made by Boolean classifiers
and study some of its theoretical and practical implications. We define notions such as
sufficient, necessary and complete reasons behind decisions, in addition to classifier and decision bias.
We show how these notions can be used to evaluate counterfactual statements such 
as ``a decision will stick even if \ldots because \ldots\ .'' 
We present efficient algorithms for computing these notions, which are based on new advances
on tractable Boolean circuits, and illustrate them using a case study.
\end{abstract}

\section{Introduction}
\label{sec:intro}

Recent work has shown that some common machine learning classifiers can be compiled into Boolean circuits that
make the same decisions. This includes Bayesian network classifiers with discrete features~\cite{ChanD03,ShihCD19}
and some types of neural networks~\cite{ShihDC19b,ChoiShiShihDarwiche19}. Proposals were also extended to
explain and verify these numeric classifiers by operating on their compiled circuits~\cite{ShihCD18,ShihCD18b}.
We extend this previous work by proposing a theory for reasoning about the decisions made by classifiers
and discus its theoretical and practical implications.

In the proposed theory, a {\em classifier} is a Boolean function. Its variables are called {\em features,} a particular input
is called an {\em instance} and the function output on some instance is called a {\em decision.}
If the function outputs \(1\) on an instance, the instance and decision are said to be {\em positive;} 
otherwise, they are {\em negative.} 
Figure~\ref{fig:classifiers} depicts a classifier~(\Ca) for college admission,
represented as an Ordered Binary Decision Diagram (OBDD)~\cite{Bryant86}.
This OBDD was compiled from the Bayesian network (BN) classifier in Figure~\ref{fig:bnc} using the algorithm in~\cite{ShihCD19}.
The OBDD is guaranteed to make the same decision as the BN classifier on every instance (same input-output behavior).

Our main goal is to {\em explain} the decisions made by a classifier on specific instances by way of providing
various insights into what caused these decisions. 
Consider Susan who passed the entrance exam, is a first-time applicant, has no work experience and a high GPA. 
Susan will be admitted by classifier~\Ca\ depicted in Figure~\ref{fig:classifiers}. 
She also comes from a rich hometown and will be admitted by classifier~\Cb\ depicted in the same figure.
We can say that Susan was admitted by classifier~\Ca\ {\em because} she passed the 
entrance exam and has a high GPA.
We can also say that {\em one reason why} classifier~\Cb\ admitted Susan is that she passed the entrance exam
and has a high GPA (there are other reasons in this case). 
Moreover, we can say that classifier~\Cb\ {\em would still} admit Susan {\em even if} 
she did not have a high GPA {\em because} she passed the entrance exam and comes from a rich hometown. Finally, we can say that 
classifier~\Cb\ is biased as it can make biased decisions: ones that are based on {\em protected} features. 
For example, it will make different decisions on two applicants who have the same characteristics except 
that one comes from a rich hometown and the other does not. 
We will also show that one can sometimes prove classifier bias by inspecting the reasons behind one of its unbiased decisions.

We will give formal definitions and justifications for the statements exemplified above
and show how to compute them algorithmically. As far as semantics, 
the main tool we will employ is the classical notion of prime implicants~\cite{BooleanFunctions,quine1,mccluskey,quine2}. 
On the computational side, we will exploit tractable Boolean circuits~\cite{DarwicheJAIR02} while
providing some new fundamental results that further extend the reach of these circuits to computing explanations. 

This paper is structured as follows.
We review prime implicants in Section~\ref{sec:PIs}. We follow by introducing the notions of sufficient, 
necessary and complete reasons in Sections~\ref{sec:sufficient}--\ref{sec:necessary}. Counterfactual
statements about decisions are discussed in Section~\ref{sec:eib}, followed by a discussion of
decision and classifier bias in Section~\ref{sec:bias}. We dedicate Section~\ref{sec:compute} to
algorithms that compute the introduced notions while illustrating them using a case study in
Section~\ref{sec:study}. We finally close with some concluding remarks in Section~\ref{sec:conclusion}.

\begin{figure}[tb]
\centering
\includegraphics[height=0.2\textheight]{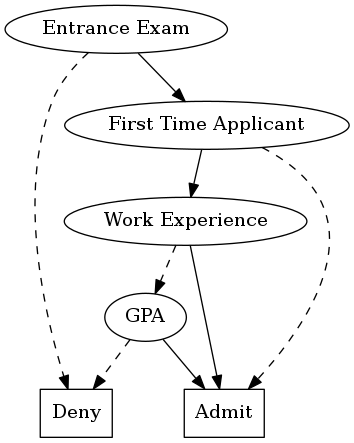}
\hspace{2mm}
\includegraphics[height=0.2\textheight]{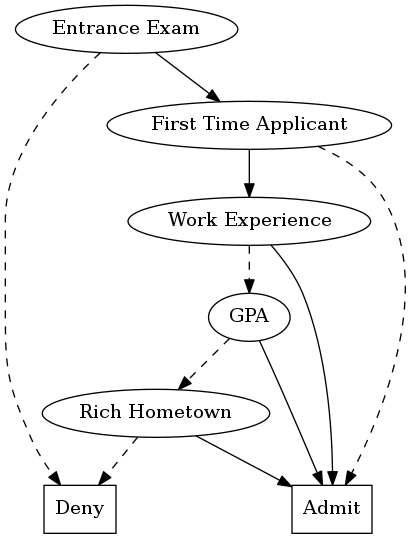}
\caption{OBDD \Ca\ (left) and OBDD \Cb\ (right). To classify an instance,
we start at the root OBDD node and repeat the following. If the feature
we are at is positive, we follow the solid edge, otherwise the dotted edge. 
\label{fig:classifiers}}
\includegraphics[width=0.40\textwidth]{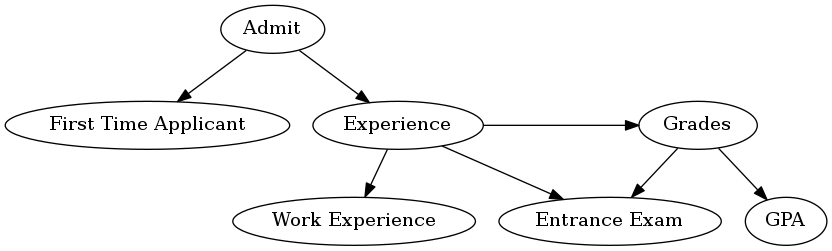}
\caption{The structure of a Bayesian network classifier. \label{fig:bnc}}
\end{figure}

\section{Classifiers, Decisions and Prime Implicants} 
\label{sec:PIs}

We represent a classifier by a propositional formula \(\Delta\) whose models (i.e., satisfying assignments)
correspond to positive instances. The negation of the formula characterizes negative instances.
Classifiers~\Ca\ and \Cb\ of Figure~\ref{fig:classifiers} are represented by the following formulas:
\begin{eqnarray*}
\Delta_1 & = & E \wedge (\neg F \vee G \vee W) \\
\Delta_2 & = & E \wedge (\neg F \vee G \vee W \vee R)
\end{eqnarray*}
We use \(\Delta(\alpha)\) to denote the decision (\(0\) or \(1\)) of classifier \(\Delta\) 
on instance \(\alpha\) (that is, \(\Delta(\alpha) =1\) iff \(\alpha \models \Delta\) and \(\Delta(\alpha) =0\) iff \(\alpha \models \neg \Delta\)).
We also define \(\Delta_\alpha = \Delta\) if the decision is positive and \(\Delta_\alpha = \neg \Delta\) 
if the decision is negative. This notation is critical and we use it frequently later noting that
\(\alpha \models \Delta_\alpha\) and 
\(\Delta(\alpha) = \Delta(\beta)\) iff \(\Delta_\alpha = \Delta_\beta\).
 
A {\em literal} is a variable (positive literal) or its negation (negative literal). A {\em term} is a consistent
conjunction of literals. Term \(\t_i\) {\em subsumes} term \(\t_j\), written \(\t_j \models \t_i\), iff
\(\t_j\) includes the literals of \(\t_i\).
For example, term \(E \wedge \neg F\) subsumes term \(E \wedge \neg F \wedge G\).
We treat a term as the {\em set} of its literals so we may write \(\t_i \subseteq \t_j\) to also mean
that \(\t_i\) subsumes \(\t_j\). We sometimes refer to a literal as a {\bf characteristic} 
and to a term \(\t\) as a {\bf property} (of an instance). We use \(\nt\) to denote the property 
resulting from negating every characteristic in property \(\t\).
We sometimes use a comma (\(,\)) instead 
of a conjunction (\(\wedge\)) when describing properties and instances (e.g., \(E,\neg F\) instead of \(E \wedge \neg F\)).

An {\em implicant} \(\t\) of propositional formula \(\Delta\) is a term that satisfies \(\Delta\), written \(\t \models \Delta\).
A {\em prime implicant} is an implicant that is not subsumed by any other implicant. 
For example, \(E \wedge \neg F \wedge G\) is an implicant of \(\Delta_1\) but is not prime 
since it is subsumed by another implicant \(E \wedge \neg F\), which happens to be prime.
Classifier~\Ca\ has the following prime implicants:
\begin{eqnarray*}
\Delta_{1} & : & (E \wedge \neg F)\:\: (E \wedge G)\:\: (E \wedge W) \\ 
\neg \Delta_{1} & : & (\neg E) \:\: (F \wedge \neg G \wedge \neg W) 
\end{eqnarray*}
Classifier~\Cb\ has the following prime implicants:
\begin{eqnarray*}
\Delta_{2} & : & (E \wedge \neg F)\:\: (E \wedge G)\:\: (E \wedge W)\:\: (E \wedge R) \\
\neg \Delta_{2} & : & (\neg E) \:\: (F \wedge \neg G \wedge \neg W \wedge \neg R)
\end{eqnarray*}

\shrink{
The following lemma is key to a number of results we present later.
\begin{lemma}\label{lem:PI}
A term \(\t\) cannot be an implicant of both \(\Delta\) and \(\neg \Delta\).
\end{lemma}
\begin{proof}
If term \(\t\) is an implicant of both \(\Delta\) and \(\neg \Delta\), then \(\t\) is inconsistent.
This is impossible by definition of a term.
\end{proof}
}

The set of prime implicants for a propositional formula can be quite large, which motivated the notion
of a {\em prime implicant cover}~\cite{quine1,mccluskey,quine2}. A set of terms \(\t_1, \ldots, \t_n\) is prime implicant cover 
for propositional formula \(\Delta\) if each term \(\t_i\) is a prime implicant of \(\Delta\) and \(\t_1 \vee \ldots \vee \t_n\) is 
equivalent to \(\Delta\). A cover may not include all prime implicants, with the missing ones called {\em redundant.} 
While covers can be useful computationally, they may not always be appropriate for explaining 
classifiers as they may lead to incomplete explanations (more on this later).

\shrink{
We used the \textit{primer} tool for computing prime implicants in our experiments~\cite{joao1,joao2},
which generates all prime implicants not just covers. This tool can also accept circuit representations
of propositional formula, as opposed to only CNF representations.
}

We will make use of the {\em conditioning} operation on propositional formula.
To condition formula \(\Delta\) on literals \(\t\), denoted \(\Delta \vert \t\),
is to replace every literal \(l\) in \(\Delta\) with \(1\) if \(l \in \t\) and with \(0\) if \(\neg l \in \t\).
We will also use {\em existential quantification:} \(\exists X \Delta = (\Delta \vert X) \vee (\Delta \vert \neg X)\).

In the next few sections, we introduce the notions of sufficient, complete and necessary reasons behind
a decision. We use these notions later to define decision and classifier bias in addition to giving semantics
to counterfactual statements relating to decisions.

\section{Sufficient Reasons} 
\label{sec:sufficient}

Prime implicants have been studied and utilized extensively in the AI and computer 
science literature.\footnote{One classical application of prime implicants in AI has been in the area of model-based 
diagnosis, where they have been used to formalize the notion of {\em kernel diagnoses} ~\cite{diagnosisPI}.
A kernel diagnosis is defined for a given device behavior and is a minimal term representing the health of some device components. 
Any system state that is compatible with a kernel diagnosis is feasible under the given system behavior. Moreover, the
set of kernel diagnoses characterize all feasible system states under the given behavior.}
However, their active utilization in explaining decisions is more recent, e.g.,~\cite{ShihCD18,IgnatievNM19,JoaoNIPS19,ReasonsMoral},
and introduced a key connection to properties of instances that we highlight next and exploit computationally later.

\begin{definition}[\bf Sufficient Reason~\cite{ShihCD18}]
\label{def:sufficient}
A sufficient reason for decision \(\Delta(\alpha)\) is a property of instance \(\alpha\) that is also a prime 
implicant of \(\Delta_\alpha\) (recall \(\Delta_\alpha\) is \(\Delta\) if the decision is positive and \(\neg \Delta\) otherwise).
\end{definition}
 
A sufficient reason identifies characteristics of an instance that justify the decision:
The decision will stick even if other characteristics of the instance were different. 
A sufficient reason is minimal:
None of its strict subsets can justify the decision. A decision can have multiple sufficient reasons,
sometimes a very large number of them.\footnote{The LIME~\cite{LIME} and Anchor~\cite{ANCHOR} systems 
can be viewed as computing approximations of sufficient reasons. The quality of these approximations has been 
evaluated on some datasets and corresponding classifiers in~\cite{JoaoApp}, where an approximation is called
{\em optimistic} if it is a strict subset of a sufficient reason and  {\em pessimistic} if it is a strict superset
of a sufficient reason.}

There is a key difference between prime implicants and sufficient reasons:
the latter must be properties of the given instance. This has significant computational implications
that we exploit in Section~\ref{sec:compute}.

Sufficient reasons were introduced in~\cite{ShihCD18} under the name of {\em PI-explanations.}
The new name we adopt is motivated by further distinctions that we
draw later and was also used in~\cite{ReasonsMoral}. 
We will also sometimes say ``a reason'' to mean ``a sufficient reason.''

Greg passed the entrance exam, is not a first time applicant, does not have a high GPA
but has work experience (\(\alpha = E, \neg F, \neg G, W\)).
Classifier~\Ca\ admits Greg, a decision that can be explained using either of the
following sufficient reasons:
\begin{itemize}
\item Passed the entrance exam and is not a first time applicant (\(E, \neg F\)).
\item Passed the entrance exam and has work experience (\(E, W\)).
\end{itemize}
Since Greg passed the entrance exam and has applied before, he will be admitted
even if his other characteristics were different. Similarly, since Greg passed
the entrance exam and has work experience, he will be admitted even if his other characteristics were different.

\begin{proposition}\label{prop:sufficient1}
Every decision has at least one sufficient reason.
\end{proposition}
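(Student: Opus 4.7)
The plan is to start from the instance itself and shrink it down to a minimal subset that still implies $\Delta_\alpha$; then verify that this minimal subset meets the definition of a sufficient reason.

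First I would observe that by the definition of $\Delta_\alpha$, we have $\alpha \models \Delta_\alpha$. Viewing the instance $\alpha$ as a term (the conjunction of the literals determining every feature), this says $\alpha$ is itself an implicant of $\Delta_\alpha$. So the set of subterms $\t \subseteq \alpha$ that are implicants of $\Delta_\alpha$ is nonempty. Since $\alpha$ has finitely many literals, this set contains a minimal element $\t^\star$ with respect to set inclusion.

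Next I would show $\t^\star$ is actually a prime implicant of $\Delta_\alpha$. By construction $\t^\star$ is an implicant. Suppose some other implicant $\t'$ of $\Delta_\alpha$ subsumes $\t^\star$, i.e., $\t' \subsetneq \t^\star$ (since $\t' \neq \t^\star$). Because $\t^\star \subseteq \alpha$, we also have $\t' \subseteq \alpha$, so $\t'$ is a strictly smaller subset of $\alpha$ that is still an implicant of $\Delta_\alpha$. This contradicts the minimality of $\t^\star$. Hence no other implicant subsumes $\t^\star$, so $\t^\star$ is prime.

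Finally, since $\t^\star \subseteq \alpha$, the term $\t^\star$ is a property of instance $\alpha$, and combined with being a prime implicant of $\Delta_\alpha$ it satisfies Definition~\ref{def:sufficient}. Therefore $\t^\star$ is a sufficient reason for $\Delta(\alpha)$. There is no real obstacle here: the whole argument is a finite-descent on the literals of $\alpha$, and the only subtlety worth stating explicitly is that minimality among subsets of $\alpha$ already yields minimality among all implicants, which is why we land on a prime implicant rather than merely a minimal one inside $\alpha$.
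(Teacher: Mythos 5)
Your proof is correct, and it takes a genuinely different route from the paper's. The paper argues top-down: it invokes the classical fact that a consistent formula $\Delta_\alpha$ is equivalent to the disjunction of its prime implicants, so since $\alpha \models \Delta_\alpha$, at least one disjunct (prime implicant) must be satisfied by $\alpha$ and is therefore a property of $\alpha$. You argue bottom-up by finite descent: $\alpha$ itself is an implicant of $\Delta_\alpha$, so pick a minimal subterm $\t^\star \subseteq \alpha$ that remains an implicant, and then observe that any implicant strictly subsuming $\t^\star$ would itself be a strict subset of $\alpha$, contradicting minimality. You correctly flag the one nontrivial step — that minimality \emph{within} the subsets of $\alpha$ already gives primality among \emph{all} implicants — and your justification of it (a subsuming implicant inherits containment in $\alpha$) is exactly right. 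Your argument is more elementary and self-contained, needing no appeal to the Blake/prime-implicant normal form; the paper's is shorter once that representation theorem is taken for granted.
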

\begin{proof}
Consider decision \(\Delta(\alpha)\). We have \(\alpha \models \Delta_\alpha\),
which means \(\Delta_\alpha\) is consistent and must have at least
one prime implicant (the empty term if \(\Delta_\alpha\) is valid). Moreover, at least one
of these prime implicants must be a property of instance \(\alpha\) since 
\(\alpha \models \Delta_\alpha\) and since \(\Delta_\alpha\) is equivalent to the disjunction 
of its prime implicants. Hence, we have at least one sufficient reason for the decision.
\end{proof}

A classifier may make the same decision on two instances but for different reasons (i.e., disjoint sufficient reasons). 
However, if two decisions on distinct instances share a reason, they must be equal.
\begin{proposition} \label{prop:sufficient2}
If decisions \(\Delta(\alpha)\) and \(\Delta(\beta)\) share a sufficient reason,
the decisions must be equal \(\Delta(\alpha) = \Delta(\beta)\).
\end{proposition}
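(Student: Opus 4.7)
The plan is to argue by contradiction via the elementary fact (the ``shrunk'' lemma in the excerpt) that a term cannot be an implicant of both a formula and its negation, since such a term would have to be inconsistent.

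First I would unpack the hypothesis. Let $\t$ be a sufficient reason shared by the two decisions. By Definition~\ref{def:sufficient}, $\t$ is a prime implicant of $\Delta_\alpha$ and also a prime implicant of $\Delta_\beta$. In particular, $\t \models \Delta_\alpha$ and $\t \models \Delta_\beta$. Recall from the setup that $\Delta_\gamma \in \{\Delta, \neg \Delta\}$ for any instance $\gamma$, so there are only four cases for the pair $(\Delta_\alpha, \Delta_\beta)$.

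Next I would dispatch the cases. The two ``matching'' cases ($\Delta_\alpha = \Delta_\beta = \Delta$, or $\Delta_\alpha = \Delta_\beta = \neg \Delta$) immediately give $\Delta(\alpha) = \Delta(\beta)$ by the identity $\Delta(\alpha)=\Delta(\beta)$ iff $\Delta_\alpha = \Delta_\beta$ noted in Section~\ref{sec:PIs}. The two ``mismatched'' cases are symmetric: say $\Delta_\alpha = \Delta$ and $\Delta_\beta = \neg \Delta$. Then $\t \models \Delta$ and $\t \models \neg \Delta$, whence $\t \models \Delta \wedge \neg \Delta$, i.e., $\t$ is inconsistent. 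This contradicts the definition of a term as a \emph{consistent} conjunction of literals. Hence the mismatched cases cannot occur, and $\Delta_\alpha = \Delta_\beta$, which is what we need.

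The only real content is the observation that terms are consistent, so there is no genuine obstacle; the argument is essentially a one-liner invocation of the shrunk lemma. If desired, I would reinstate that lemma as a named auxiliary result and cite it here, since it will presumably be reused for the dual statements about necessary and complete reasons later in the paper.
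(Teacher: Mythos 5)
Your proof is correct and is essentially the paper's own argument: the paper's proof also observes that a shared sufficient reason $\t$ is a prime implicant of both $\Delta_\alpha$ and $\Delta_\beta$, and concludes $\Delta_\alpha = \Delta_\beta$ precisely ``since $\t$ is consistent,'' which is the case analysis you spell out. Your version is just a more explicit rendering of that one-liner.
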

\begin{proof}
Suppose the decisions share sufficient reason \(\t\). Then \(\t\) is property of both \(\alpha\) and \(\beta\)
and \(\t\) is a prime implicant of both \(\Delta_\alpha\) and \(\Delta_\beta\). 
Hence, \(\Delta_\alpha = \Delta_\beta\) since \(\t\) is consistent and \(\Delta(\alpha) = \Delta(\beta)\).
\end{proof}

\shrink{
The following proposition highlights a critical property of sufficient reasons. 
A reason not only triggers the decision but must also be a property of the instance under consideration. 
While this second property is to be expected, it can sometimes be missed.

\begin{proposition} \label{prop:sufficient3}
If decisions \(\Delta(\alpha)\) and \(\Delta(\beta)\) do not share sufficient reasons,
then no sufficient reason for decision \(\Delta(\alpha)\) can be a property of 
instance \(\beta\) (even if the decisions are equal).
\end{proposition}
\begin{proof}
Let \(\t\) be a sufficient reason for decision \(\Delta(\alpha)\). Then \(\t\) is a prime 
implicant of \(\Delta_\alpha\). If \(\t\) is a property of instance \(\beta\), then 
\(\beta \models \t \models \Delta_\alpha\) and \(\Delta_\beta = \Delta_\alpha\).
Hence, \(\t\) is a sufficient reason for decision \(\Delta(\beta)\), a contradiction,
so \(\t\) cannot be a property of instance \(\beta\).
\end{proof}
}

We will see later that sufficient reasons can provide insights about a
classifier that go well beyond explaining its decisions.

\section{Complete Reasons}
\label{sec:the}

A sufficient reason identifies a minimal property of an instance that can 
trigger a decision. The {\em complete reason} behind a decision characterizes all
properties of an instance that can trigger the decision.

\begin{definition}[\bf Complete Reason]
\label{def:the}
The complete reason for a decision is the disjunction of all its sufficient reasons.
\end{definition}

\shrink{
\begin{proposition}\label{prop:the}
Let \(\R\) be the reason behind decision \(\Delta(\alpha)\).
Then \(\R \models \Delta_\alpha\).
\end{proposition}
\begin{proof}
Follows directly from Definition~\ref{def:the} and since sufficient reasons are implicants of \(\Delta_\alpha\).
\end{proof}
}

The complete reason for decision \(\Delta(\alpha)\) captures {\em every}
property of instance \(\alpha\), and {\em only} properties of instance \(\alpha\), that can trigger the decision.
It precisely captures why the particular decision is made.

\begin{thm}\label{theo:the}
Let \(\R\) be the complete reason for decision \(\Delta(\alpha)\). 
If instance \(\beta\) does not satisfy \(\R\) and \(\Delta(\beta) = \Delta(\alpha)\), 
then no sufficient reason for decision \(\Delta(\beta)\) can be a property of instance \(\alpha\).
\end{thm}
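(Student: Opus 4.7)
The plan is to argue by contradiction, leveraging the definition of a sufficient reason together with the earlier observation that equal decisions imply \(\Delta_\alpha = \Delta_\beta\). Suppose, contrary to the claim, that some sufficient reason \(\t\) for decision \(\Delta(\beta)\) is a property of instance \(\alpha\). By Definition~\ref{def:sufficient}, \(\t\) is then a prime implicant of \(\Delta_\beta\) and a property of \(\beta\).

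Next I would move this reason over to \(\alpha\). From the hypothesis \(\Delta(\alpha) = \Delta(\beta)\), the paper's notation gives \(\Delta_\alpha = \Delta_\beta\), so \(\t\) is a prime implicant of \(\Delta_\alpha\). Combined with the assumption that \(\t\) is a property of \(\alpha\), this matches Definition~\ref{def:sufficient} exactly, so \(\t\) qualifies as a sufficient reason for decision \(\Delta(\alpha)\).

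To finish, I would feed \(\t\) into the complete reason. By Definition~\ref{def:the}, \(\R\) is the disjunction of all sufficient reasons for \(\Delta(\alpha)\), so \(\t\) is one of its disjuncts. Since \(\t\) is a property of \(\beta\), we have \(\beta \models \t\), whence \(\beta \models \R\). This contradicts the hypothesis \(\beta \not\models \R\), which completes the proof.

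The argument is essentially a bookkeeping exercise on definitions, so I do not anticipate a genuine obstacle; the only subtle step is noticing that the extra assumption ``\(\t\) is a property of \(\alpha\)'' is precisely what upgrades \(\t\) from being a sufficient reason for \(\Delta(\beta)\) to also being one for \(\Delta(\alpha)\), which is what lets us include it in \(\R\) and derive the contradiction.
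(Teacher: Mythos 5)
Your proof is correct and follows essentially the same argument as the paper: assume the sufficient reason \(\t\) for \(\Delta(\beta)\) is a property of \(\alpha\), use \(\Delta_\alpha = \Delta_\beta\) to upgrade it to a sufficient reason for \(\Delta(\alpha)\), and derive \(\beta \models \t \models \R\), contradicting \(\beta \not\models \R\). The only cosmetic difference is that the paper phrases it as a direct argument over an arbitrary \(\t\) while you frame it as an explicit proof by contradiction.
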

\begin{proof}
Suppose \(\beta \not \models \R\) and \(\Delta(\beta) = \Delta(\alpha)\).
Then \(\Delta_\beta = \Delta_\alpha\). Let \(\t\) be a sufficient reason for decision \(\Delta(\beta)\).
Then \(\t\) is a property of instance \(\beta\) and a prime implicant of both \(\Delta_\beta\) and \(\Delta_\alpha\).
If \(\t\) were a property of instance \(\alpha\), then \(\t\) is a sufficient reason for decision \(\Delta(\alpha)\),
\(\t \models \R\) and \(\beta \models \t \models \R\), a contradiction.
Hence, \(\t\) cannot be a property of instance \(\alpha\). 
\end{proof}
We will sometimes say ``the reason'' to mean ``the complete reason.''
Classifier~\Ca\ admits Greg (\(\alpha = E, \neg F, \neg G, W\)) for the reason \(\R = E \wedge (\neg F \vee W)\).
Greg was admitted because he passed the entrance exam and satisfied one of two additional 
requirements: he applied before and has work experience.
Classifier~\Ca\ also admits Susan (\(\beta = E, F, G, \neg W\)). Susan does not satisfy the reason \(\R\).
There is one sufficient reason for admitting Susan: she passed the entrance exam and has a good GPA (\(E,G\)),
which is not a property of Greg. The classifier admitted Greg and Susan for different reasons.

The complete reason for a decision is unique up to logical equivalence and can be used 
to enumerate its sufficient reasons.\footnote{Pierre Marquis observed that the complete reason can be formulated
using the notion of {\em literal forgetting} which is a more fine grained notion than {\em variable forgetting}
(also known as existential quantification)~\cite{Marquis00,DBLP:journals/jair/LangLM03,DBLP:journals/tocl/HerzigLM13}.}

\begin{thm}\label{theo:the-pi}
Let \(\R\) be the complete reason for decision \(\Delta(\alpha)\). 
The prime implicants of \(\R\) are the sufficient reasons for decision \(\Delta(\alpha)\).
\end{thm}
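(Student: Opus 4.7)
The plan is to show the two containments separately: every sufficient reason is a prime implicant of \(\R\), and every prime implicant of \(\R\) is a sufficient reason.

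For the easy direction, let \(\t\) be a sufficient reason for \(\Delta(\alpha)\). Since \(\t\) appears as a disjunct of \(\R\), we have \(\t \models \R\). To see that \(\t\) is prime for \(\R\), suppose \(\t' \subsetneq \t\) were also an implicant of \(\R\). Because each disjunct of \(\R\) is an implicant of \(\Delta_\alpha\), the disjunction satisfies \(\R \models \Delta_\alpha\), and therefore \(\t' \models \Delta_\alpha\). This contradicts the primality of \(\t\) as an implicant of \(\Delta_\alpha\).

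The harder direction rests on a lemma: if a term \(\t\) is an implicant of \(\R\), then some sufficient reason \(\t_i\) satisfies \(\t_i \subseteq \t\). This would fail for arbitrary DNFs, but holds here because every disjunct of \(\R\) is a property of the same instance \(\alpha\). To prove it, I would extend \(\t\) to a complete assignment \(m\) by setting each variable absent from \(\t\) to the opposite of its value in \(\alpha\). Since \(\t \models \R\), some disjunct \(\t_i\) is satisfied by \(m\). Any literal of \(\t_i\) on a variable outside \(\t\) would, by construction of \(m\), disagree with \(\alpha\), contradicting \(\t_i \subseteq \alpha\). Hence every literal of \(\t_i\) is on a variable in \(\t\); since those literals are true in \(m\), they must coincide with the corresponding literals of \(\t\), yielding \(\t_i \subseteq \t\).

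Applying the lemma to a prime implicant \(\t\) of \(\R\) produces a sufficient reason \(\t_i \subseteq \t\); since \(\t_i\) is itself an implicant of \(\R\) and \(\t\) is prime, we get \(\t = \t_i\), so \(\t\) is a sufficient reason. The main obstacle I anticipate is the lemma itself, since the analogous statement fails for general DNFs---e.g., for \((A \wedge B) \vee (\neg A \wedge B)\), the prime implicant \(B\) contains neither disjunct. What makes the argument go through is precisely the ``uniform instance'' structure of \(\R\), which prevents two disjuncts from assigning opposite literals to the same variable and lets the ``flip everything outside \(\t\)'' extension act as the witness needed in the proof.
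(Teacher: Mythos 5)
Your proof is correct, and it reaches the conclusion by a different route than the paper. The paper's proof is essentially a one-step appeal to the classical consensus-closure theorem: since all disjuncts \(\t_i\) of \(\R\) are properties of the same instance \(\alpha\), no two of them contain opposite literals, so the DNF is (vacuously) closed under consensus; and since the \(\t_i\) are prime implicants of \(\Delta_\alpha\), none subsumes another; the cited theorem then says the disjuncts are exactly the prime implicants of \(\R\). You instead prove the two containments directly. Your easy direction (a sufficient reason is a prime implicant of \(\R\), via \(\R \models \Delta_\alpha\)) is clean. Your hard direction replaces the consensus machinery with a self-contained semantic lemma --- any implicant of \(\R\) contains some disjunct --- proved by extending the implicant to a total assignment that flips every unmentioned variable away from \(\alpha\), which forces the satisfied disjunct to live entirely inside the implicant. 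Both arguments hinge on exactly the same structural fact (every disjunct is a subset of \(\alpha\), so disjuncts never conflict on a variable), and your counterexample \((A \wedge B) \vee (\neg A \wedge B)\) correctly isolates why that fact is indispensable. What the paper's version buys is brevity by leaning on a textbook result; what yours buys is a fully elementary, verifiable argument that in effect reproves the relevant special case of that result from scratch.
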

\begin{proof}
Let \(\t_1, \ldots, \t_n\) be the sufficient reasons for decision \(\Delta(\alpha)\)
and hence \(\R = \t_1 \vee \ldots \vee \t_n\).
The key observation is that terms \(\t_i\) are properties of instance \(\alpha\).
Hence, for every two terms \(\t_i\) and \(\t_j\), term \(\t_i\)
cannot contain some literal \(X\) while term \(\t_j\) containing literal \(\neg X\). 
The DNF \(\t_1 \vee \ldots \vee \t_n\) is then
closed under consensus.\footnote{The consensus rule infers the term \(\delta_1 \wedge \delta_2\) 
from terms \(X \wedge \delta_1\) and \(\neg X \wedge \delta_2\). One can convert a DNF 
into its set of prime implicants by closing the DNF under consensus and then removing 
subsumed terms; see~\cite[Chapter 3]{BooleanFunctions}.} 
Since no term \(\t_i\) subsumes another term \(\t_j\), the DNF  \(\t_1 \vee \ldots \vee \t_n\) contains
all prime implicants of \(\R\). Hence, the prime implicants of complete reason \(\R\)
are precisely the sufficient reasons of decision \(\Delta(\alpha)\).
\end{proof}

\section{Necessary Properties and Reasons}
\label{sec:necessary}

The {\em necessary property} of a decision is a maximal property of an instance that is essential 
for explaining the decision on that instance.

\begin{definition}[\bf Necessary Characteristics and Properties]
\label{def:necessary-p}
A characteristic is necessary for a decision iff it appears in every sufficient reason for the decision.
The necessary property for a decision is the set of all its necessary characteristics.
\end{definition}
The necessary property is unique but could be empty (when the decision has no necessary characteristics).

If an instance ceases to satisfy one necessary characteristic, the corresponding decision is guaranteed to change.
\begin{proposition}\label{prop:necessary1}
If instance \(\beta\) disagrees with instance \(\alpha\) on only one characteristic necessary for
decision \(\Delta(\alpha)\), then \(\Delta(\alpha) \neq \Delta(\beta)\).
\end{proposition}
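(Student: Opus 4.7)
The natural plan is to argue by contradiction: assume $\Delta(\alpha)=\Delta(\beta)$, so $\Delta_\alpha=\Delta_\beta$, and exhibit a sufficient reason for $\Delta(\alpha)$ that omits the one necessary characteristic $\ell$ on which $\alpha$ and $\beta$ disagree (say $\ell\in\alpha$, $\neg\ell\in\beta$). This contradicts the assumption that $\ell$ is necessary for $\Delta(\alpha)$.

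First, I would invoke Proposition~\ref{prop:sufficient1} twice: decision $\Delta(\alpha)$ has some sufficient reason, which by necessity of $\ell$ must be of the form $\pi=\ell\wedge\sigma$, and decision $\Delta(\beta)$ has some sufficient reason $\tau$, a prime implicant of $\Delta_\beta=\Delta_\alpha$ that is a property of $\beta$. I would then split on what $\tau$ says about $\ell$'s variable. Because $\tau$ is a property of $\beta$, it cannot contain the literal $\ell$, so either $\tau$ does not mention $\ell$'s variable at all, or $\tau$ contains $\neg\ell$.

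The easy case is when $\tau$ does not mention $\ell$'s variable: since $\alpha$ and $\beta$ agree on every other feature, $\tau$ is then also a property of $\alpha$, hence a sufficient reason for $\Delta(\alpha)$ that does not contain $\ell$ -- immediate contradiction. The interesting case is when $\tau=\neg\ell\wedge\sigma'$. Here I would apply the consensus rule to $\pi$ and $\tau$: both $\sigma$ and $\sigma'$ consist of literals drawn from properties of $\alpha$ (for $\sigma$ this is by $\pi\subseteq\alpha$, for $\sigma'$ this is because $\sigma'\subseteq\beta$ and $\sigma'$ avoids $\ell$'s variable, on which $\alpha$ and $\beta$ otherwise agree), so $\sigma\wedge\sigma'$ is consistent and is a subset of $\alpha$'s literals. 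A short case split on the value of $\ell$ shows $\sigma\wedge\sigma'\models\Delta_\alpha$, so it is an implicant of $\Delta_\alpha$, and any prime implicant contained in it is still a subset of $\alpha$'s literals and still avoids $\ell$'s variable, giving a sufficient reason for $\Delta(\alpha)$ that omits $\ell$ -- again contradicting necessity of $\ell$.

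The main obstacle is the consensus step in the second case: one must be careful that the prime implicant obtained by shrinking $\sigma\wedge\sigma'$ is simultaneously (a) a prime implicant of $\Delta_\alpha$, (b) a property of $\alpha$, and (c) free of $\ell$'s variable. The first two follow because shrinking a term both preserves primality witnesses and keeps the literal set inside $\alpha$, while (c) is automatic because $\sigma\wedge\sigma'$ does not mention $\ell$'s variable and one only removes literals. Everything else is routine bookkeeping with the definitions of sufficient reason and necessary characteristic.
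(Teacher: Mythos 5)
Your proof is correct, and it rests on the same key idea as the paper's: a consensus step on the flipped characteristic produces an implicant of \(\Delta_\alpha\) that is a property of \(\alpha\) yet omits that characteristic, which then shrinks to a sufficient reason contradicting necessity. The paper executes this more economically, though: since \(\alpha \models \Delta_\alpha\) and \(\beta \models \Delta_\beta = \Delta_\alpha\), the full instances \(\alpha\) and \(\beta\), viewed as terms, are themselves implicants of \(\Delta_\alpha\), and applying consensus directly to them on the single disagreeing literal immediately yields \(\alpha \cap \beta\) as an implicant avoiding that literal's variable. This bypasses your invocation of Proposition~\ref{prop:sufficient1}, the selection of sufficient reasons \(\pi\) and \(\tau\) for the two decisions, and the case split on whether \(\tau\) mentions the flipped variable. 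Your extra care in the second case (checking that the shrunken prime implicant stays inside \(\alpha\) and avoids the flipped variable) is exactly the bookkeeping the paper leaves implicit, so nothing is lost either way; your route just does with two applications of the sufficient-reason machinery what the paper does with none.
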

\begin{proof}
Suppose \(\alpha\) and \(\beta\) are as premised. 
If \(\Delta(\alpha) = \Delta(\beta)\) then  \(\Delta_\alpha = \Delta_\beta\) and  \(\t = \alpha \cap \beta\) 
is an implicant of \(\Delta_\alpha\) by consensus on the flipped characteristic \(\a\).
Moreover, \(\t\) does not contain characteristic \(\a\) so it cannot be necessary, a contradiction.
\end{proof}

If an instance ceases to satisfy more than one necessary characteristic, the decision does not 
necessarily change. However, if the decision sticks then it would be for completely different reasons.

\begin{thm}\label{theo:necessary}
Let \(\beta\) be an instance that disagrees with instance \(\alpha\) on at least one characteristic
necessary for decision \(\Delta(\alpha)\). Decisions \(\Delta(\alpha)\) and \(\Delta(\beta)\) 
must have disjoint sufficient reasons. 
\end{thm}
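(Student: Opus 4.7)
The plan is to argue by contradiction: suppose decisions $\Delta(\alpha)$ and $\Delta(\beta)$ share a sufficient reason $\t$, and derive an inconsistency with the hypothesis that $\alpha$ and $\beta$ disagree on some characteristic necessary for $\Delta(\alpha)$.

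First, I would invoke Proposition~\ref{prop:sufficient2}: sharing a sufficient reason forces $\Delta(\alpha) = \Delta(\beta)$, hence $\Delta_\alpha = \Delta_\beta$. Thus $\t$ is a prime implicant of $\Delta_\alpha$ and, being a property of $\alpha$, it qualifies as a sufficient reason for $\Delta(\alpha)$ in the sense of Definition~\ref{def:sufficient}. Symmetrically $\t$ is a sufficient reason for $\Delta(\beta)$ and in particular $\t \subseteq \beta$.

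Next, let $\a$ be a characteristic that is necessary for decision $\Delta(\alpha)$ and on which $\alpha$ and $\beta$ disagree (such an $\a$ exists by hypothesis). Without loss of generality $\a \in \alpha$ and $\neg \a \in \beta$. By Definition~\ref{def:necessary-p}, $\a$ lies in every sufficient reason for $\Delta(\alpha)$, so $\a \in \t$. Combined with $\t \subseteq \beta$ from the previous step, this yields $\a \in \beta$, contradicting $\neg \a \in \beta$ since $\beta$ is a consistent instance.

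There is no real obstacle here beyond carefully unpacking the definitions; the only subtle point to make explicit is that a sufficient reason $\t$ shared by the two decisions is, by definition, a \emph{property} of both $\alpha$ and $\beta$ (not merely an implicant of $\Delta_\alpha$), which is exactly what forces the necessary characteristic $\a$ to be present in $\beta$ and produces the contradiction.
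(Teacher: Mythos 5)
Your proof is correct and uses essentially the same argument as the paper: any sufficient reason for \(\Delta(\alpha)\) must contain the necessary characteristic on which \(\alpha\) and \(\beta\) disagree, so it cannot be a property of \(\beta\) and hence cannot be a sufficient reason for \(\Delta(\beta)\). The paper states this directly rather than via contradiction, and does not need the detour through Proposition~\ref{prop:sufficient2}, but these are only cosmetic differences.
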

\begin{proof}
Let \(\n\) be the necessary characteristics of decision \(\Delta(\alpha)\) that instances \(\alpha\) and \(\beta\) disagree on.
A sufficient reason \(\t\) of \(\Delta(\alpha)\) cannot be a property of instance \(\beta\) 
since \(\n \subseteq \t\) and \(\beta\) contains \(\nn\). 
Hence, \(\t\) cannot be a sufficient reason for decision \(\Delta(\beta)\) and the two
decisions must have disjoint sufficient reasons.
\end{proof}

Consider a classifier \(\Delta = (X\wedge Y \wedge Z) \vee (\neg X \wedge \neg Y \wedge Z)\)
and instance \(\alpha = X,Y,Z\). The decision \(\Delta(\alpha)\) is positive with \(X,Y,Z\) as the only 
sufficient reason. Hence, all three characteristics of \(\alpha\) are necessary:
Flipping any single characteristic of instance \(\alpha\) will lead to a negative decision. However,
flipping the two characteristics \(X\) and \(Y\) preserves the positive decision but leads to a new, single
sufficient reason \(\neg X, \neg Y, Z\). 

The complete reason for a decision has enough information to compute its necessary 
characteristics and necessary property.

\begin{proposition}\label{prop:necessary2}
A characteristic is necessary for a decision iff it is implied by the decision's complete reason.
\end{proposition}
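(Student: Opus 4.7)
The plan is to prove the biconditional by treating each direction separately, exploiting the characterization of the complete reason $\R$ as the disjunction of all sufficient reasons (Definition~\ref{def:the}) and the fact, established in Theorem~\ref{theo:the-pi}, that the prime implicants of $\R$ are exactly the sufficient reasons for the decision.

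For the forward direction, suppose a characteristic $\ell$ is necessary for decision $\Delta(\alpha)$. Then by Definition~\ref{def:necessary-p}, $\ell$ appears in every sufficient reason $\t_i$, so $\t_i \models \ell$ for each $i$. Since $\R = \t_1 \vee \ldots \vee \t_n$, we immediately get $\R \models \ell$. This direction is essentially a one-line unwinding of the definitions.

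For the reverse direction, suppose $\R \models \ell$, and let $\t$ be any sufficient reason for $\Delta(\alpha)$. By Theorem~\ref{theo:the-pi}, $\t$ is a prime implicant of $\R$, so $\t \models \R \models \ell$. Now I would invoke the elementary fact that a consistent term implies a literal if and only if it contains that literal: if $\ell \notin \t$, then one can set $\ell$ to false while satisfying all literals of $\t$ (since $\t$ is consistent and does not constrain $\ell$), contradicting $\t \models \ell$. Hence $\ell \in \t$, and since $\t$ was an arbitrary sufficient reason, $\ell$ is necessary.

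The argument is essentially routine once Theorem~\ref{theo:the-pi} is available; the only point requiring slight care is the elementary lemma about terms implying literals, which I would state in one line rather than proving formally. No serious obstacle is anticipated.
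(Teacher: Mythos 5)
Your proof is correct and matches the route the paper intends: the paper's one-line proof cites Definition~\ref{def:necessary-p} and Theorem~\ref{theo:the-pi}, and your argument is the natural unpacking of those two ingredients (``literal appears in every prime implicant of $\R$ iff $\R$ entails the literal'') together with the elementary observation that a consistent term entails exactly the literals it contains. The only minor redundancy is that in the reverse direction you invoke Theorem~\ref{theo:the-pi} to get $\t \models \R$, when $\t$ is already a disjunct of $\R$ by Definition~\ref{def:the}; this does not affect correctness.
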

\begin{proof}
Follows from Definition~\ref{def:necessary-p} and Theorem~\ref{theo:the-pi}.
\end{proof}

We can now define the notion of necessary reason.
\begin{definition}[\bf Necessary Reason]
\label{def:necessary-r}
The necessary property of a decision is called the necessary reason for the decision iff it is
the only sufficient reason for the decision.
\end{definition}
There may be no necessary reason for a  decision as there may be no instance property that is
both sufficient and necessary for triggering the decision. We next highlight how the complete
reason for a decision, being a {\em condition} on an instance instead of a {\em property,} 
is always necessary and sufficient for explaining the decision.

Consider the complete reason \(\R\) for decision \(\Delta(\alpha)\) and recall that
it characterizes all properties of instance \(\alpha\) that can trigger 
the decision: \(\R \equiv \bigvee_{\t \models \Delta_\alpha} \t,\) where \(\t\) is a property of instance \(\alpha\). 
The reason \(\R\) is then a logical condition that triggers the decision (\(\R \models \Delta_\alpha\)). 
If the complete reason is weakened into a condition \(\R_w\) that continues to trigger the 
decision (\(\R \models \R_w \models \Delta_\alpha\)), then \(\R_w\) will admit properties not satisfied by instance \(\alpha\). 
Moreover, if it is strengthened into a condition \(\R_s\), then \(\R_s\) will continue to trigger the decision (\(\R_s \models \R \models \Delta_\alpha\))
but will stop admitting some properties of instance \(\alpha\) that can trigger the decision. 
Hence, the complete reason \(\R\) is a necessary and sufficient condition (not necessarily a property) 
for explaining the decision on instance \(\alpha\).

\section{Decision Counterfactuals}
\label{sec:eib}

We mentioned Susan earlier who passed the entrance exam, is a first time applicant, 
has a high GPA but no work experience (\(\alpha = E, F, G, \neg W\)).
Classifier~\Ca\ admits Susan {\em because} she passed the entrance exam and has a 
high GPA. Greg was also admitted by this classifier. His application is similar to Susan's
except that he applied before and has work experience (\(\beta = E, \neg F, G, W\)). 
We cannot pinpoint a single property of Greg that triggered admission, so we cannot issue 
a ``because'' statement when explaining this decision.  

\begin{definition}[\bf Because]
\label{def:because}
Consider decision \(\Delta(\alpha)\) and let \(\t\) be a property of instance \(\alpha\).
The decision is made ``because \(\t\)'' iff \(\t\) is the complete reason for the decision.
\end{definition}

\begin{proposition}\label{prop:because}
A decision is made because \(\t\) iff \(\t\) is the necessary reason for the 
decision (i.e., the only sufficient reason).
\end{proposition}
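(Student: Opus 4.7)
The plan is to unfold both sides into statements about the complete reason $\R$ for decision $\Delta(\alpha)$ and then apply Theorem~\ref{theo:the-pi}, which tells us that the prime implicants of $\R$ are exactly the sufficient reasons. The ``because'' side says $\t$ is a property of $\alpha$ and $\t\equiv\R$; the ``necessary reason'' side says $\t$ is the necessary property \emph{and} the only sufficient reason. So I want to move between ``$\R$ is (equivalent to) a single term'' and ``there is exactly one sufficient reason which coincides with the necessary property.''

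For the forward direction, I would assume the decision is made because $\t$. By Definition~\ref{def:because}, $\t$ is a property of $\alpha$ and $\R\equiv\t$. A term has exactly one prime implicant, namely itself (any strictly smaller sub-term fails to be subsumed by it in the right direction, and no other term can be a prime implicant of something equivalent to $\t$). So by Theorem~\ref{theo:the-pi}, $\t$ is the unique sufficient reason. Since the necessary property is defined as the intersection of all sufficient reasons, with a single sufficient reason the necessary property equals $\t$ itself. Hence $\t$ meets Definition~\ref{def:necessary-r}.

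For the converse, I would assume $\t$ is the necessary reason, i.e.\ the necessary property and the only sufficient reason. Because $\t$ is a sufficient reason, it is by definition a property of instance $\alpha$. The complete reason $\R$ is the disjunction of all sufficient reasons (Definition~\ref{def:the}), and that disjunction has the single disjunct $\t$, so $\R\equiv\t$. This matches Definition~\ref{def:because}, giving ``because $\t$.''

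There is not really a hard step here; the proof is essentially a bookkeeping exercise over the definitions. The one point that needs a careful sentence is the observation that a term $\t$, viewed as a propositional formula, has itself as its unique prime implicant, which is what lets me pass from ``$\R\equiv\t$'' to ``$\t$ is the only sufficient reason'' via Theorem~\ref{theo:the-pi}. Everything else is unwinding Definitions~\ref{def:because},~\ref{def:necessary-p}, and~\ref{def:necessary-r}.
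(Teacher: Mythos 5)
Your proof is correct and follows the same route as the paper, which simply asserts that the equivalence follows from Definitions~\ref{def:sufficient}, \ref{def:the} and~\ref{def:necessary-r}. Your version just fills in the details, and your use of Theorem~\ref{theo:the-pi} (a single consistent term is its own unique prime implicant) is exactly the right way to make the forward direction rigorous.
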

\begin{proof}
Follows from Definitions~\ref{def:sufficient},~\ref{def:the} and~\ref{def:necessary-r}.
\end{proof}

One may be interested in statements that provide insights into a decision
beyond the reasons behind it. For example, how the classifier may have decided if 
some instance characteristics were different. 

An example statement is the one we mentioned in Section~\ref{sec:intro}: 
Susan would have been admitted even if she did not have a high GPA because 
she comes from a rich hometown and passed the entrance exam. 
This statement exemplifies counterfactuals of the following form: The decision will 
stick even if \(\na\) because \(\t\), where \(\a\) and \(\t\) are properties of the given instance. 

\begin{definition}[\bf Even-If-Because]
\label{def:eib}
Consider decision \(\Delta(\alpha)\) and let \(\a\) and~\(\t\) be properties of instance \(\alpha\).
The decision sticks ``even if \(\na\) because \(\t\)'' iff \(\t\) is the complete reason for the decision 
after changing property \(\a\) of instance \(\alpha\) to \(\na\) (i.e., flipping all characteristics in \(\a\)).
\end{definition}

Let \(\beta\) be the result of replacing property \(\a\) of instance \(\alpha\) by \(\na\)
and suppose that \(\t\) is the complete reason for decision \(\Delta(\beta)\). Then \(\t\) is the only sufficient
reason for decision \(\Delta(\beta)\) by Definition~\ref{def:the}. Hence \(\beta \models \t\) and
properties \(\a\) and \(\t\) must be disjoint. Moreover, \(\alpha \models \t \models \Delta_\beta\) so
\(\Delta_\alpha=\Delta_\beta\) and \(\Delta(\alpha) = \Delta(\beta)\). Hence, the decision sticks 
``even if \(\na\) because \(\t\).''

Applicant Susan discussed earlier (\(\alpha = E, F, G, \neg W, R\)) is admitted by classifier~\Cb. 
The decision will stick even if Susan had a low GPA (\(\neg G\)) because she comes from a rich hometown 
and passed the entrance exam (\(E,R\)).
This statement is justified since \(E,R\) is the complete reason for decision \(\Delta(\beta)\) where 
\(\beta = E, F, \neg G, \neg W, R\) is the result of replacing characteristic \(G\) by \(\neg G\) in instance \(\alpha\).

Jackie did not pass the entrance exam, is not a first time
applicant, has a low GPA but has work experience (\(\alpha = \neg E, \neg F, \neg G, W\)).
Jackie is denied admission by classifier~\Ca. The decision will stick even if Jackie had a high GPA (\(G\))
because she did not pass the entrance exam (\(\neg E\)). This statement is justified since 
\(\neg E\) is the complete reason for decision \(\Delta(\beta)\)  where \(\beta = \neg E, \neg F, G, W\)
is the result of replacing characteristic \(\neg G\) by \(G\) in instance \(\alpha\).

\section{Decision Bias and Classifier Bias}
\label{sec:bias}
We will now discuss the dependence of decisions on certain features, with a particular
application to detecting decision and classifier bias.

Intuitively, a decision is {\em biased} if it depends on a {\em protected feature:} one that should 
not be used when making the decision (e.g., gender, zip code, or ethnicity).\footnote{A protected feature
may have been unprotected during classifier design.}
We formalize bias next while making a distinction between classifier bias and
decision bias: A classifier may be biased in that it could make biased decisions, but 
the particular decisions it already made may have been unbiased. 
While classifier bias can always be detected by examining its decision function,
we will show that it can sometimes be detected by examining the
complete reason behind one of its unbiased decisions.

\begin{definition}[\bf Decision Bias]
\label{def:bias-d}
Decision \(\Delta(\alpha)\) is biased iff \(\Delta(\alpha) \neq \Delta(\beta)\)
for some \(\beta\) that disagrees with \(\alpha\) on only protected features.
\end{definition}

Bias can be positive or negative. For example, an applicant may be 
admitted because they come from a rich hometown, or may be denied
admission because they did not come from a rich hometown.

The following result provides a necessary and sufficient condition for detecting decision bias.
\begin{thm}\label{theo:bias-d}
A decision is biased iff each of its sufficient reasons contains at least one protected feature.
\end{thm}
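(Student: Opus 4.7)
The plan is to prove the two directions by contraposition, which aligns the statement neatly with the semantics of sufficient reasons developed in the previous sections.

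For the easy direction, I would assume some sufficient reason $\t$ of $\Delta(\alpha)$ contains only unprotected features and show $\Delta(\alpha)$ is not biased. Pick any $\beta$ disagreeing with $\alpha$ only on protected features. Then $\alpha$ and $\beta$ agree on every unprotected characteristic, so every literal of $\t$ is still a characteristic of $\beta$, i.e., $\beta \models \t$. Since $\t$ is a prime implicant of $\Delta_\alpha$, we get $\beta \models \t \models \Delta_\alpha$, hence $\Delta_\beta = \Delta_\alpha$ and $\Delta(\beta) = \Delta(\alpha)$. Thus no witness of bias exists.

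For the harder direction I would contrapose again: assume $\Delta(\alpha)$ is not biased, and construct a sufficient reason with no protected features. Let $\alpha_u$ be the term consisting of those characteristics in $\alpha$ that involve only unprotected features. The hypothesis that the decision is unbiased says precisely that for every instance $\beta$ extending $\alpha_u$ (i.e., every completion of $\alpha_u$ on the protected features), $\Delta(\beta) = \Delta(\alpha)$, and hence $\beta \models \Delta_\alpha$. Since every total model of $\alpha_u$ satisfies $\Delta_\alpha$, the term $\alpha_u$ is itself an implicant of $\Delta_\alpha$. Any implicant contains at least one prime implicant (shrink it by dropping non-essential literals), so there is a prime implicant $\t$ of $\Delta_\alpha$ with $\t \subseteq \alpha_u \subseteq \alpha$. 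This $\t$ is a property of instance $\alpha$, hence a sufficient reason for $\Delta(\alpha)$, and it contains no protected features by construction — contradicting the premise of the backward direction.

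The main obstacle I anticipate is the bookkeeping step in the second direction, namely justifying cleanly that $\alpha_u$ being satisfied by every full extension is the same as $\alpha_u$ being an implicant of $\Delta_\alpha$. This is essentially the definition of an implicant viewed through the conditioning operation of Section~\ref{sec:PIs}, but it is the one point where the proof quietly uses the fact that $\Delta_\alpha$ is a Boolean function over all features, not just the unprotected ones. Once that is spelled out, the extraction of a prime implicant $\t \subseteq \alpha_u$ is routine, and the result follows immediately.
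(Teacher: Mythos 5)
Your proof is correct and matches the paper's argument in substance: the first direction uses the same observation that a protected-feature-free sufficient reason is a property of every $\beta$ agreeing with $\alpha$ on unprotected features, and the second direction extracts, exactly as the paper does, a prime implicant from the term of unprotected characteristics of $\alpha$ (the paper merely phrases both directions as contradictions rather than contrapositives). No gaps.
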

\begin{proof}
Suppose decision \(\Delta(\alpha)\) is biased yet has a sufficient reason \(\t\) with no protected features.
We will now show a contradiction. Since the decision is biased, there must exist an instance \(\beta\)
that disagrees with instance \(\alpha\) on only protected features and  \(\Delta(\alpha) \neq \Delta(\beta)\).
Since \(\t\) is a property of \(\alpha\) and \(\beta\), we have \(\alpha \models \t \models \Delta_\alpha\) and
\(\beta \models \t \models \Delta_\alpha\). Hence, \(\Delta_\alpha = \Delta_\beta\) and
\(\Delta(\alpha) = \Delta(\beta)\), which is a contradiction.

Suppose every sufficient reason of decision \(\Delta(\alpha)\) contains at least one protected feature. Let \(\X\) be
these protected features and \(\t\) be the characteristics of instance \(\alpha\) that do not involve features \(\X\). 
Assume \(\Delta(\alpha) = \Delta(\beta)\) for every instance \(\beta\) that agrees with instance
\(\alpha\) on characteristics \(\t\) (that is, \(\beta\) disagrees with \(\alpha\) only on features in \(\X\)).
Term \(\t\) must then be an implicant of \(\Delta_\alpha\) and a subset \(\n\) of \(\t\) must be a prime implicant 
of \(\Delta_\alpha\) (could be \(\t\) itself). Since \(\t\) is a property of instance \(\alpha\), decision \(\Delta(\alpha)\) 
has sufficient reason \(\n\) that does not include a protected feature in \(\X\), which is a contradiction. 
Hence, \(\Delta(\alpha) \neq \Delta(\beta)\) for some instance \(\beta\) that disagrees with instance \(\alpha\) 
on only protected features in \(\X\), and decision \(\Delta(\alpha)\) is biased.
\end{proof}

Theorem~\ref{theo:bias-d} does not require sufficient reasons to share a protected feature,
only that each must contain at least one protected feature.

Consider classifier \Cd, which admits applicants who have a good GPA (\(G\)) as long as they
pass the entrance exam (\(E\)), are male (\(M\)) or come from a rich hometown (\(R\)):
\begin{equation}
\Delta_3 = (G \wedge E) \vee (G \wedge M) \vee (G \wedge R). \label{class:d}
\end{equation}
Bob has a good GPA, did not pass the entrance exam and comes from a rich hometown 
(\(\alpha = G, \neg E, M, R\)). He is admitted with two sufficient reasons: \(G, M\) and \(G,R\).
The decision is biased since each sufficient reason contains a protected feature.
This classifier will not admit Nancy who has similar characteristics but does not come from a rich
hometown: \(\beta = G, \neg E, \neg M, \neg R\). It will also admit Scott who has the same
characteristics as Nancy: \(\gamma = G, \neg E, M, \neg R\). 

Even though this classifier is biased, some of its decisions may be unbiased. If an applicant has a good
GPA and passes the entrance exam (\(G, E\)), they will be admitted regardless of their protected 
characteristics. Moreover, if an applicant does not have a good GPA (\(\neg G\)), they will be denied
admission regardless of their other characteristics, including protected ones.

\begin{definition}[\bf Classifier Bias]
\label{def:bias-c}
A classifier is biased iff at least one of its decisions is biased.
\end{definition}
A classifier may be biased, but some of its decisions may be unbiased. 
Moreover, one can sometimes infer classifier bias by inspecting the sufficient reasons behind one of its unbiased decisions. 

\begin{thm}\label{theo:bias-c}
A classifier is biased iff one of its decisions has a sufficient reason that includes a protected feature.
\end{thm}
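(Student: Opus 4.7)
The plan is to handle the two directions separately, with the forward direction being essentially a bookkeeping exercise and the reverse direction requiring a small construction.

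For the forward direction, I would chain together the previously established machinery. If the classifier is biased, then by Definition~\ref{def:bias-c} it makes some biased decision \(\Delta(\alpha)\). By Proposition~\ref{prop:sufficient1}, this decision has at least one sufficient reason \(\t\), and Theorem~\ref{theo:bias-d} says that every sufficient reason of a biased decision must contain a protected feature. So \(\t\) witnesses the claim.

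For the reverse direction, suppose some decision \(\Delta(\alpha)\) has a sufficient reason \(\t\) containing a protected literal \(\ell\) over feature \(X\). Note that \(\Delta(\alpha)\) itself need not be biased (it may have other sufficient reasons free of protected features), so I need to construct a different instance whose decision is biased. The key leverage is that \(\t\) is a \emph{prime} implicant of \(\Delta_\alpha\): removing \(\ell\) leaves \(\t \setminus \{\ell\}\), which fails to be an implicant of \(\Delta_\alpha\). Hence there is an instance \(\gamma\) satisfying \(\t \setminus \{\ell\}\) with \(\Delta(\gamma) \neq \Delta(\alpha)\). Crucially, \(\gamma\) must contain \(\neg\ell\), because any extension of \(\t\) itself is forced into \(\Delta_\alpha\). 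Now let \(\beta\) be the instance obtained from \(\gamma\) by flipping the value of feature \(X\); then \(\beta \models \t \models \Delta_\alpha\), so \(\Delta(\beta) = \Delta(\alpha) \neq \Delta(\gamma)\), while \(\beta\) and \(\gamma\) disagree only on the single protected feature \(X\). Therefore decision \(\Delta(\beta)\) is biased by Definition~\ref{def:bias-d}, and the classifier is biased by Definition~\ref{def:bias-c}.

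The hard part is the reverse direction, specifically recognizing that one cannot conclude from a sufficient reason with a protected feature that the given decision itself is biased — we must witness bias elsewhere. The construction above is the natural one, and the essential ingredient is the primality of sufficient reasons, which guarantees that dropping the protected literal produces a genuine boundary across which the classifier's decision flips. Once this instance pair \((\beta,\gamma)\) is exhibited, appealing to the definitions closes the argument.
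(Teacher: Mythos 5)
Your proof is correct and follows essentially the same route as the paper's: the forward direction chains the definition of classifier bias with Theorem~\ref{theo:bias-d} (plus existence of a sufficient reason), and the reverse direction exploits primality of the sufficient reason to exhibit two instances that differ only on protected features yet receive different decisions. The only cosmetic difference is that you drop a single protected literal and build the witnessing pair directly, whereas the paper drops all protected literals of the term at once and argues by contradiction; both hinge on the same primality fact.
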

\begin{proof}
Suppose classifier \(\Delta\) is biased. By Definition~\ref{def:bias-c}, some decision \(\Delta(\alpha)\) is biased.
By Theorem~\ref{theo:bias-d}, every sufficient reason of decision \(\Delta(\alpha)\) must contain at least one
protected feature. 

Suppose some decision \(\Delta(\alpha)\) has a sufficient reason \(\t\) that contains protected features \(\X \neq \emptyset\).
For any instance \(\beta\) such that \(\beta \models \t\), we must have \(\Delta(\beta) = \Delta(\alpha)\). 
We now show that there is an instance \(\beta \models \t\) and instance \(\gamma\) that disagrees with \(\beta\) 
on only features \(\X\) such that \(\Delta(\beta) \neq \Delta(\gamma)\).
Suppose the contrary is true: for all such \(\beta\) and \(\gamma\), we have \(\Delta(\beta)=\Delta(\gamma)=\Delta(\alpha)\).
Then \(\t \setminus \a\) is an implicant of \(\Delta_\alpha\), where \(\a\) are the protected characteristics in \(\t\).
This is impossible since \(\t\) is a prime implicant of \(\Delta_\alpha\). Hence, \(\Delta(\beta) \neq \Delta(\gamma)\)
for some \(\beta\) and \(\gamma\) with the stated properties and the classifier is biased.
\end{proof}

If decision \(\Delta(\alpha)\) has protected features in some but not all of its sufficient reasons, 
the decision is not biased according to Theorem~\ref{theo:bias-d}. But classifier \(\Delta\) is biased 
according to Theorem~\ref{theo:bias-c} as we can {\em prove} that it will make a biased decision on some 
other instance \(\beta \neq \alpha\).

Consider classifier \Cd\ in~(\ref{class:d}) and Lisa who has a good GPA, passed the entrance exam
and comes from a rich hometown (\(G, E, \neg M, R\)). The classifier will admit Lisa for
two sufficient reasons: \(G,E\) and \(G,R\). The decision is unbiased: any applicant who has similar unprotected
characteristics will be admitted. However, since one of the sufficient reasons contains a protected
feature, the classifier is biased as it can make a biased decision on a different applicant. 
The proof of Theorem~\ref{theo:bias-c} suggests that the classifier will make different decisions on
two applicants with a good GPA that disagree only on whether they come from a rich hometown.
Nancy (\(G, \neg E, \neg M, \neg R\)) and  Heather (\(G, \neg E, \neg M, R\)) are such applicants.

The following theorem shows how one can detect decision bias using the complete reason
behind the decision. We use this theorem (and Theorem~\ref{theo:bias-c-cond}) when discussing 
algorithms in Section~\ref{sec:compute}.

\begin{thm}\label{theo:bias-d-exist}
A decision is biased iff \(\exists (X_1, \ldots, X_n) \R\) is not valid where \(X_1, \ldots, X_n\)
are all unprotected features and \(\R\) is the complete reason behind the decision.
\end{thm}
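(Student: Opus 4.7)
The plan is to reduce the claim to a prime-implicant statement about the complete reason $\R$ and then connect prime implicants to existential quantification. The two ingredients I will lean on are Theorem~\ref{theo:bias-d} (decision biased iff every sufficient reason contains a protected feature) and Theorem~\ref{theo:the-pi} (the sufficient reasons for $\Delta(\alpha)$ are exactly the prime implicants of $\R$). Combining these, the decision is biased iff every prime implicant of $\R$ contains at least one protected feature.

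So the substantive step is to prove the following equivalence: $\exists (X_1,\ldots,X_n)\,\R$ is \emph{not} valid iff every prime implicant of $\R$ contains at least one protected feature. One direction is easy: if $\R$ has a prime implicant $\t$ composed only of unprotected literals, then $\t \models \R$, and for any assignment to the protected features we can extend $\t$ (setting the remaining unprotected features arbitrarily) to an assignment that satisfies $\R$; hence $\exists (X_1,\ldots,X_n)\,\R$ evaluates to $1$ on every protected-feature assignment and is valid.

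For the other direction I will use a property of complete reasons that was highlighted inside the proof of Theorem~\ref{theo:the-pi}: because every sufficient reason is a property of the instance $\alpha$, no two literals appearing in the prime implicants of $\R$ can be complementary — in particular, every protected literal that shows up in a prime implicant of $\R$ is consistent with $\alpha$. Let $\alpha_P$ denote the restriction of the instance to the protected features, and consider the protected assignment $\overline{\alpha_P}$ that flips every protected characteristic of $\alpha$. Assume every prime implicant of $\R$ contains at least one protected literal; by the property just noted, that literal agrees with $\alpha_P$, and therefore fails under $\overline{\alpha_P}$. Thus no prime implicant of $\R$ is satisfied by any total assignment whose protected part is $\overline{\alpha_P}$, and since $\R$ is equivalent to the disjunction of its prime implicants, $\R(\x,\overline{\alpha_P}) = 0$ for every unprotected $\x$. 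Hence $\exists (X_1,\ldots,X_n)\,\R$ is false on $\overline{\alpha_P}$, so it is not valid.

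Chaining the equivalences then yields the theorem. The main obstacle I anticipate is the $\Rightarrow$ direction of the quantification lemma, because it is not true for an arbitrary Boolean function that validity of $\exists(X_1,\ldots,X_n)\,\R$ forces a purely-unprotected prime implicant — the argument genuinely needs the fact that $\R$ is a complete reason for some instance, which restricts which protected literals can appear. Once that structural observation is in hand, the rest is just straightforward bookkeeping on prime implicants.
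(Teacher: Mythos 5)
Your proof follows the same high-level route as the paper: reduce decision bias to the prime-implicant statement via Theorems~\ref{theo:bias-d} and~\ref{theo:the-pi}, then relate that to non-validity of the existential projection onto the protected features. The genuine difference is that you explicitly justify the harder direction of the quantification step, and you are right that this step needs care. The paper's proof says, after replacing unprotected literals with $1$, ``the result is valid iff some term $\t_i$ contains only variables in $X_1,\ldots,X_n$,'' but the ``only if'' half of that claim is \emph{false} for an arbitrary DNF (consider $X \vee \neg X$ with $X$ protected: every term contains a protected literal, yet the quantified formula is valid). It holds here precisely because all the terms $\t_i$ are properties of the same instance $\alpha$, so no two of them carry complementary protected literals, and the protected assignment $\overline{\alpha_P}$ falsifies every term simultaneously --- exactly the witness you construct. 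So you have filled a small but real gap that the paper's one-line argument glosses over; the rest of your bookkeeping (the easy direction via extending a purely-unprotected implicant, and the chaining through Theorems~\ref{theo:bias-d} and~\ref{theo:the-pi}) matches the paper's intent and is correct.
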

\begin{proof}
Let \(\t_1, \ldots, \t_n\) be the decision's sufficient reasons and hence 
\(\R = \t_1 \vee \ldots \vee \t_n\). Existentially quantifying variables \(X_i\)
from a DNF is done by replacing their literals with \(1\). The result is valid
iff some term \(\t_i\) contains only variables in \(X_1, \ldots, X_n\). Hence, 
\(\exists X_1, \ldots, X_n \R\) is not valid iff each term \(\t_i\) contains
variables beyond  \(X_i\) (i.e., each sufficient reason contains protected features).
\end{proof}

The following result shows how classifier bias can sometimes be detected based
on the complete reason behind an unbiased decision.

\begin{thm}\label{theo:bias-c-cond}
A classifier is biased if \(\R|X \not \equiv \R|\neg X\) where \(X\) is a protected feature 
and \(\R\) is the complete reason for some decision. 
\end{thm}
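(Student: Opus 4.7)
The plan is to leverage Theorem~\ref{theo:the-pi} (that the prime implicants of the complete reason $\R$ are exactly the sufficient reasons behind $\Delta(\alpha)$) together with Theorem~\ref{theo:bias-c} (a classifier is biased iff some decision has a sufficient reason containing a protected feature). So the task reduces to showing: if $\R \vert X \not\equiv \R \vert \neg X$, then at least one sufficient reason for $\Delta(\alpha)$ mentions $X$.

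I would proceed by contrapositive. Suppose no sufficient reason for $\Delta(\alpha)$ mentions feature $X$, i.e., $X$ does not appear (positively or negatively) in any of the prime implicants $\t_1, \ldots, \t_n$ of $\R$. By Theorem~\ref{theo:the-pi}, we have the equivalence $\R \equiv \t_1 \vee \ldots \vee \t_n$. Since none of the $\t_i$ involves $X$, the right-hand side is a formula syntactically free of $X$, so $\R$ is semantically independent of $X$. In particular, replacing $X$ by $1$ and replacing $X$ by $0$ in this equivalent formula yields the same formula, giving $\R \vert X \equiv \R \vert \neg X$.

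Taking the contrapositive of what we just showed: if $\R \vert X \not\equiv \R \vert \neg X$, then some sufficient reason $\t_i$ of $\Delta(\alpha)$ contains a literal on feature $X$. Because $X$ is a protected feature, $\t_i$ is a sufficient reason of a decision that contains a protected feature, and Theorem~\ref{theo:bias-c} immediately concludes that the classifier is biased.

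There is no real obstacle here; the only subtlety is making sure the step ``$\R$ equivalent to a formula not mentioning $X$ implies $\R \vert X \equiv \R \vert \neg X$'' is invoked rather than trying to argue directly about the syntactic structure of $\R$ (which may well mention $X$ even when $\R$ is semantically independent of $X$). Using the prime-implicant representation guaranteed by Theorem~\ref{theo:the-pi} bypasses this cleanly. Note the asymmetry with Theorems~\ref{theo:bias-d} and~\ref{theo:bias-c-cond} uses only the ``if'' direction, consistent with the statement being one-directional (the classifier may be biased in ways not witnessed by this particular $\R$).
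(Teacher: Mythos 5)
Your proof is correct and follows essentially the same route as the paper: reduce via Theorems~\ref{theo:the-pi} and~\ref{theo:bias-c} to showing that \(\R|X \not\equiv \R|\neg X\) forces \(X\) to appear in some prime implicant of \(\R\). The only difference is that you prove just the implication needed (by contrapositive, which avoids any polarity argument), whereas the paper establishes the full equivalence ``\(\R|X \not\equiv \R|\neg X\) iff \(X\) appears in some prime implicant of \(\R\)'' using the observation that all prime implicants of \(\R\) are properties of one instance and hence mention \(X\) with a single polarity.
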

\begin{proof}
Given Theorems~\ref{theo:the-pi} and~\ref{theo:bias-c}, it is sufficient to show that
\(\R|X \not \equiv \R|\neg X\) iff feature \(X\) appears in some prime implicant of \(\R\).
Let \(\t_1, \ldots, \t_n\) be the prime implicants of \(\R\).
Feature \(X\) appears either positively or negatively in these prime implicants
since terms \(\t_i\) are all properties of the same instance.
Suppose without loss of generality that feature \(X\) appears positively in terms \(\t_i\) (if any).
Then \(\R|X \equiv \bigvee_{X \not \in \t_i} \t_i \vee \bigvee_{X \in \t_i} \t_i \setminus \{X\}\) 
and \(\R|\neg X \equiv \bigvee_{X \not \in \t_i} \t_i\).
Hence \(\R|X \not \equiv \R|\neg X\) iff \(X \in \t_i\) for some prime implicant \(\t_i\).
\end{proof}
Theorem~\ref{theo:bias-c-cond} follows from Theorems~\ref{theo:the-pi} and~\ref{theo:bias-c} 
and a known result: A Boolean function depends on a variable \(X\) iff \(X\) 
appears in one of its prime implicants. We include the full proof for completeness.

\section{Computing Reasons and Related Queries}
\label{sec:compute}

\begin{figure*}[tb]
\centering
\includegraphics[height=0.20\textheight]{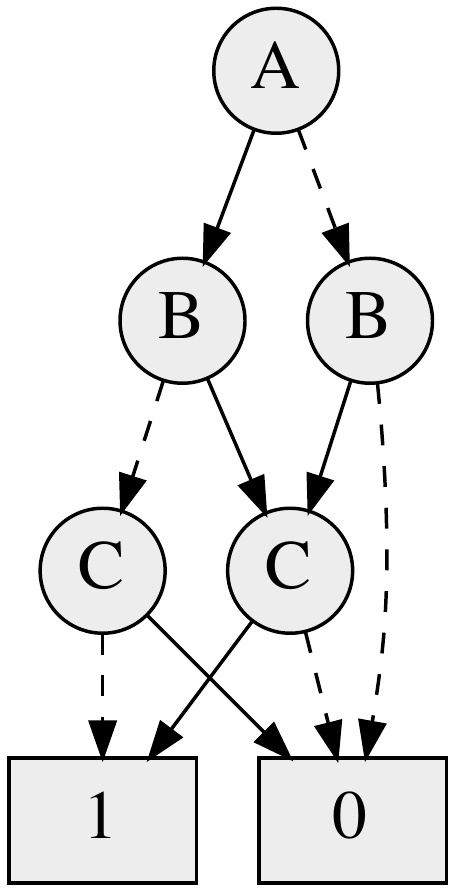}
\hspace{2mm}
\includegraphics[height=0.30\textheight]{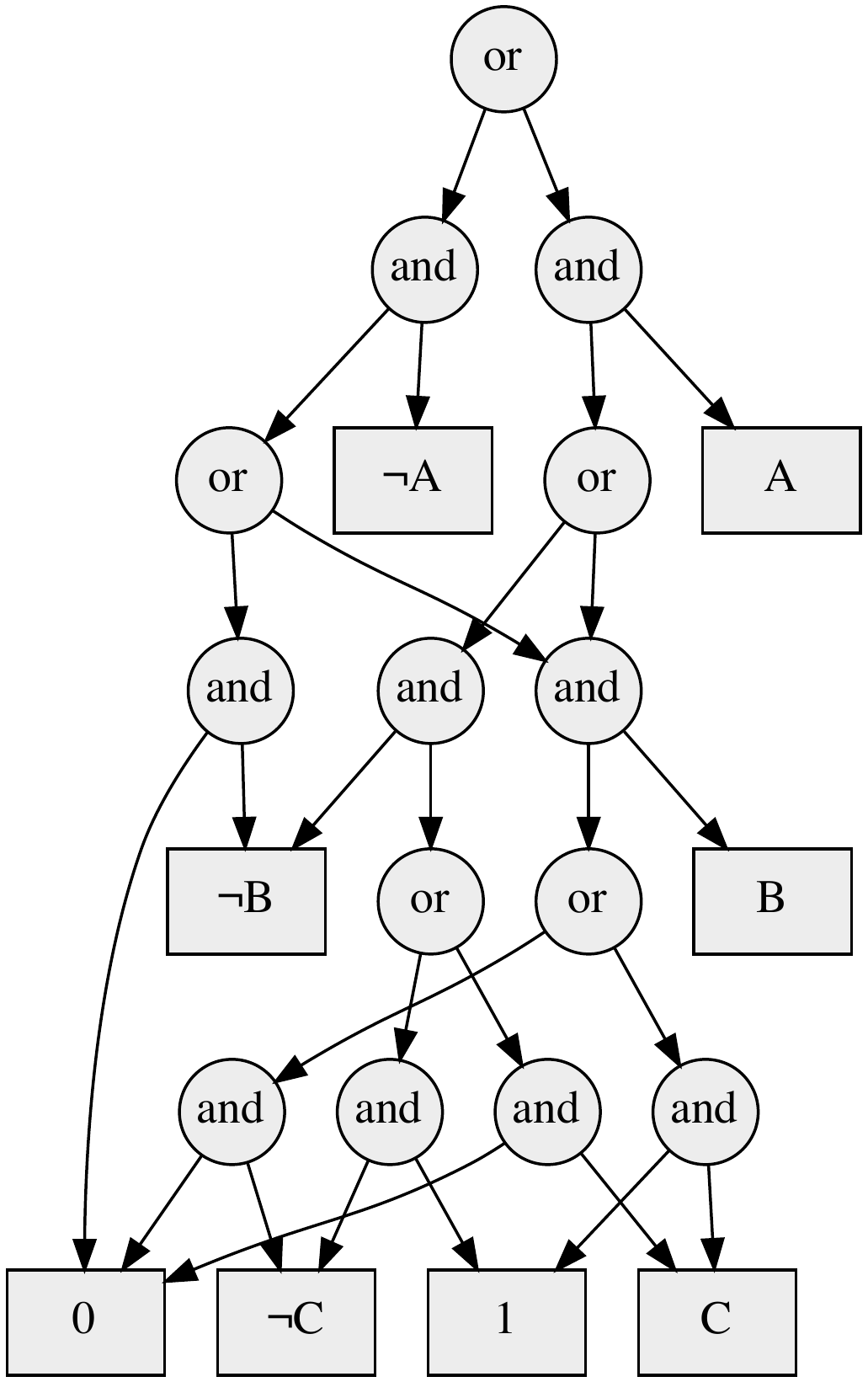}
\hspace{2mm}
\includegraphics[height=0.30\textheight]{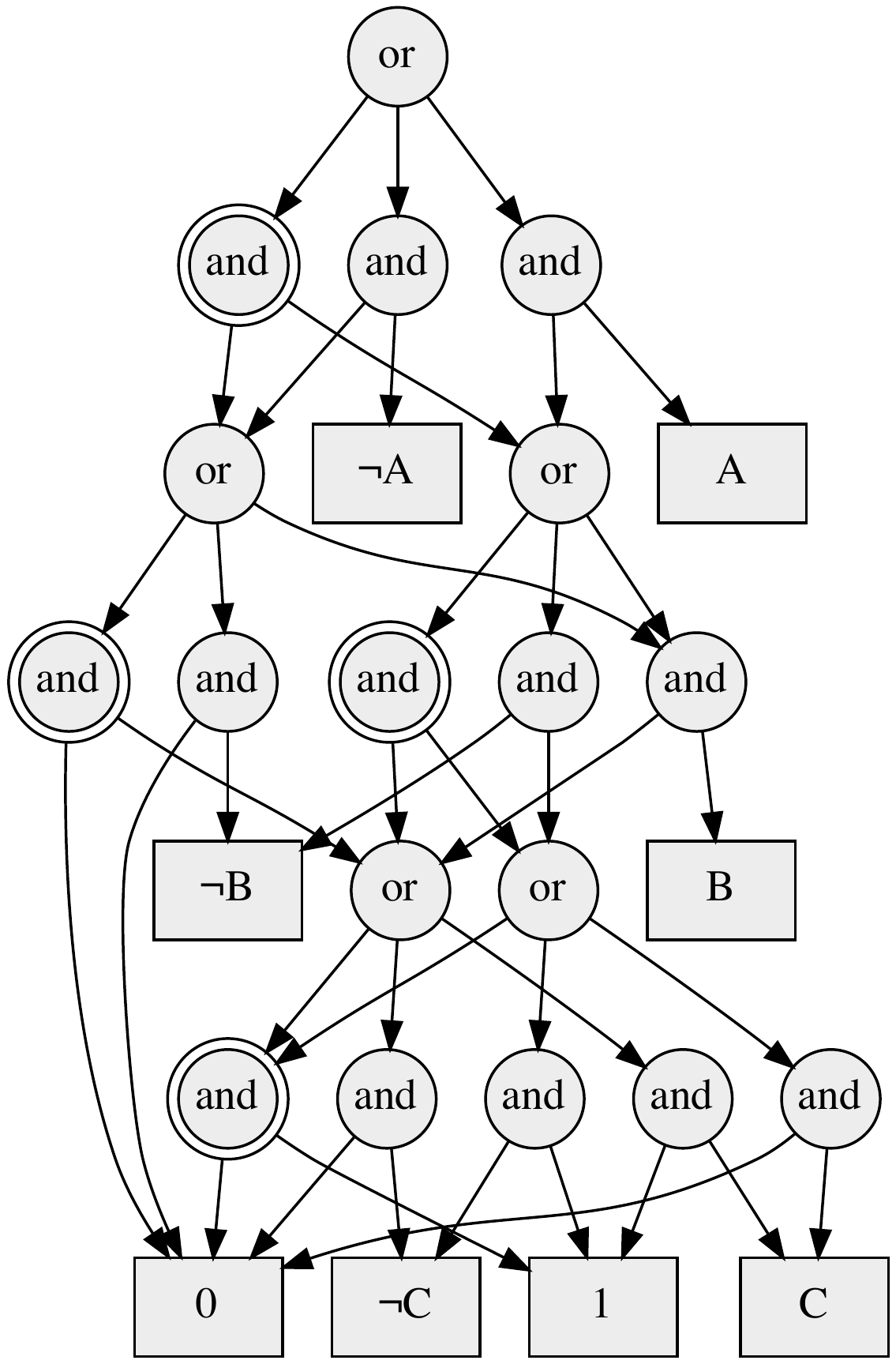}
\hspace{2mm}
\includegraphics[height=0.30\textheight]{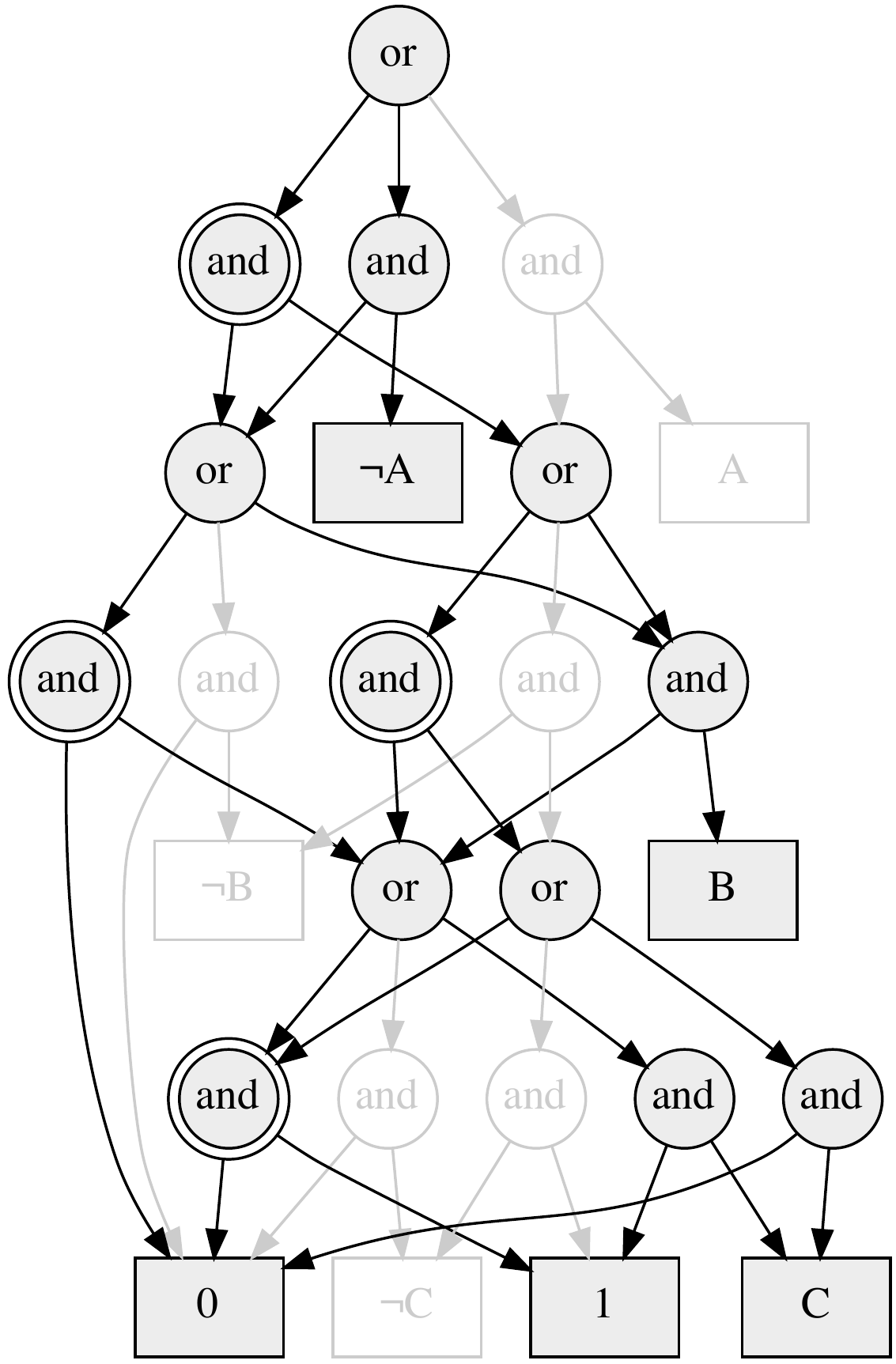}
\caption{From left to right: OBDD, Decision-DNNF circuit, consensus circuit, 
and the filtering of consensus circuit by instance \(\neg A,B,C\).
\label{fig:obdd2nnf}}
\end{figure*}

The enumeration of PI-explanations (sufficient reasons) was treated in~\cite{ShihCD18}
by modifying the algorithm in~\cite{CoudertMadre93} for computing prime implicant 
covers; see also~\cite{PrimeOverview93,minato1993fast}. The modified algorithm
optimizes the original one by integrating the instance into the prime implicant enumeration
process, but we are unaware of a complexity bound for the original algorithm or its modification. 
Moreover, since the algorithm is based on prime implicant covers, it is incomplete. 
Consider classifier \(\Delta = (X \wedge Z) \vee (Y \wedge \neg Z)\), which has three prime implicants:
\((X \wedge Z)\), \((Y \wedge \neg Z)\) and \((X \wedge Y)\). The last prime implicant is
redundant and may not be generated when computing a cover.
Instance \(\alpha = X, Y, Z\) leads to a positive decision and two sufficient reasons:
\((X \wedge Z)\) and \((X \wedge Y)\). An algorithm based on
covers may miss the sufficient reason \((X \wedge Y)\) and is therefore incomplete.
This can be problematic for queries that rely on examining all sufficient reasons, such
as decision and classifier bias (Definitions~\ref{def:bias-d} and~\ref{def:bias-c}).

We next propose a new approach based on computing the complete reason \(\R\) for a 
decision (Definition~\ref{def:the}), which characterizes all sufficient reasons, and then use
it to compute multiple queries. For example, we can enumerate all sufficient reasons using 
the reason \(\R\) (Theorem~\ref{theo:the-pi}). We can also use it to compute the necessary 
reason for a decision (Proposition~\ref{prop:necessary2}) and to detect decision bias (Theorem~\ref{theo:bias-d-exist}). 
Even classifier bias can sometimes be inferred directly using the reason \(\R\) (Theorem~\ref{theo:bias-c-cond}) among other queries.

Assuming the classifier is represented using a suitable tractable circuit (e.g., OBDD), our approach
will compute the complete reason for a decision in linear time regardless of how many sufficient
reasons it may have (could be exponential). Moreover, it will ensure that the computed
complete reason is represented by a tractable circuit, allowing us to answer many queries in polytime.

\subsection{Computing Complete Reasons}

Our approach is based on Decision-DNNF circuits, obtained using compilers
such as \ctd\footnote{\url{http://reasoning.cs.ucla.edu/c2d/}}~\cite{Darwiche04},
\mctd\footnote{\url{http://reasoning.cs.ucla.edu/minic2d/}}~\cite{OztokD14,OztokD18} and 
\df\footnote{\url{http://www.cril.univ-artois.fr/kc/d4.html}}~\cite{LagniezM17}.

\begin{definition}[\bf Decision-NNF Circuit]
\label{def:d-nnf}
A DNNF circuit has literals or constants as inputs and two type of gates: and-gates and or-gates,
where the subcircuits feeding into each and-gate share no variables. It is called a Decision-DNNF 
circuit if every or-gate has exactly two inputs of the form:  \(X \wedge \mu\) and \(\neg X \wedge \nu\), where \(X\) is a variable.
\end{definition}

DNNF circuits were introduced in~\cite{DarwicheJACM01}. Decision-DNNF circuits were identified
in~\cite{HuangD05,HuangD07} and include Ordered Binary Decision Diagrams (OBDDs)~\cite{Bryant86,HuangD07}.
Figure~\ref{fig:obdd2nnf} depicts an OBDD and its corresponding Decision-DNNF circuit. The circuit is obtained
by mapping each OBDD node with variable \(X\), high child \(\mu\) and low child \(\nu\) into the circuit fragment
\((X\wedge \mu) \vee (\neg X \wedge \nu)\) (two and-gates and one or-gate). 
For more on DNNF circuits and OBDD, see~\cite{DarwicheJAIR02,OztokD14}.

We compute the reason behind decision \(\Delta(\alpha)\) by applying two operations to
a Decision-DNNF circuit \(\Delta_\alpha\): {\em consenus} then {\em filtering.} 

\begin{definition}[\bf Consensus Circuit]
\label{def:c-circuit}
The consensus circuit of Decision-DNNF circuit \(\Gamma\) is denoted \(\cons(\Gamma)\)
and obtained by adding input \(\mu \wedge \nu\) to every or-gate with inputs \(X \wedge \mu\) and \(\neg X \wedge \nu\).
\end{definition}
Figure~\ref{fig:obdd2nnf} depicts a Decision-DNNF circuit and its consensus circuit (third from left). The
consensus operation adds four and-gates denoted with double circles.

\begin{proposition}\label{prop:consensus}
A Decision-DNNF circuit \(\Gamma\) has the same satisfying assignments as its consensus circuit \(\cons(\Gamma)\).
\end{proposition}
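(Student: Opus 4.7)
The plan is to prove the equivalence by structural induction on the Decision-DNNF circuit $\Gamma$, with the heart of the argument being the classical consensus identity applied locally at each decision or-gate.

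First I would establish the propositional identity
\[
(X \wedge \mu) \vee (\neg X \wedge \nu) \;\equiv\; (X \wedge \mu) \vee (\neg X \wedge \nu) \vee (\mu \wedge \nu)
\]
for arbitrary Boolean functions $\mu, \nu$ (in particular, we need not assume that $X$ is absent from $\mu$ or $\nu$). One direction is trivial since adding a disjunct can only enlarge the set of satisfying assignments. For the other direction, any assignment that satisfies $\mu \wedge \nu$ must assign $X$ either to $1$, in which case $X \wedge \mu$ holds, or to $0$, in which case $\neg X \wedge \nu$ holds; so $\mu \wedge \nu \models (X \wedge \mu) \vee (\neg X \wedge \nu)$.

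Next I would induct on the structure of $\Gamma$, showing that $\cons(\Gamma)$ and $\Gamma$ have the same satisfying assignments. Constants and literals are fixed points of $\cons$, giving the base case. For an and-gate whose inputs are subcircuits $\Gamma_1,\ldots,\Gamma_k$, the consensus operation leaves the gate type unchanged and recursively transforms each subcircuit; the inductive hypothesis then gives $\cons(\Gamma_i) \equiv \Gamma_i$ and hence $\cons(\Gamma) \equiv \Gamma$ (decomposability plays no role in this step). For an or-gate of the required decision form with inputs $X \wedge \mu$ and $\neg X \wedge \nu$, the inductive hypothesis yields $\cons(\mu) \equiv \mu$ and $\cons(\nu) \equiv \nu$, so the transformed or-gate computes
\[
(X \wedge \cons(\mu)) \vee (\neg X \wedge \cons(\nu)) \vee (\cons(\mu) \wedge \cons(\nu)) \;\equiv\; (X \wedge \mu) \vee (\neg X \wedge \nu) \vee (\mu \wedge \nu),
\]
which by the consensus identity collapses back to $(X \wedge \mu) \vee (\neg X \wedge \nu)$, the function originally computed by the gate.

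There is really no hard step here; the argument is essentially the soundness of the consensus rule. The only subtlety worth highlighting in the write-up is that the new input $\mu \wedge \nu$ reuses the existing subcircuits $\mu$ and $\nu$ rather than duplicating them, so $\cons(\Gamma)$ remains of size linear in $\Gamma$ — a fact that is not needed for correctness of this proposition but matters for the complexity claims made later in the section.
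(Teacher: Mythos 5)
Your proof is correct and rests on exactly the same key fact as the paper's: the local consensus identity \((X \wedge \mu) \vee (\neg X \wedge \nu) \equiv (X \wedge \mu) \vee (\neg X \wedge \nu) \vee (\mu \wedge \nu)\), which the paper states as its entire one-line proof. Your structural induction merely makes explicit the routine substitution-of-equivalents step that the paper leaves implicit, so the two arguments are essentially identical.
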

\begin{proof}
\((X\wedge\mu)\vee(\neg X\wedge\nu) \equiv (X\wedge\mu)\vee(\neg X\wedge\nu)\vee(\mu\wedge\nu)\).
\end{proof}
A consensus circuit can be obtained in time linear in the size of Decision-DNNF circuit,
but is not a DNNF circuit. We next discuss the filtering of a consensus circuit, which 
leads to a tractable circuit. 

\begin{definition}[\bf Filtered Circuit]
\label{def:f-circuit}
The filtering of consensus circuit \(\Gamma\) by instance \(\alpha\), where \(\Gamma(\alpha)=1\), 
is denoted \(\filter(\Gamma,\alpha)\) and obtained by replacing every literal \(l \not \in \alpha\) by constant \(0\).
\end{definition}
Filtering is only defined on consensus circuits and requires an instance that satisfies the circuit. 
Figure~\ref{fig:obdd2nnf} depicts an example. 
The filtered circuit is on the far right of the figure, where grayed out nodes and edges 
can be dropped due to replacing literals by constant \(0\).

Filtering is also a linear time operation.
Consensus preserves models, but filtering drops some of them.
We will characterize the models preserved by filtering after presenting two required results.

Let \(\Gamma\) be a circuit that results from filtering by instance \(\alpha\).
The circuit is monotone in the following sense.
If instance \(\gamma\) agrees with instance \(\alpha\) no less than instance \(\beta\) does, 
then \(\beta \models \Gamma\) implies \(\gamma \models \Gamma\).
For example, if \(\alpha = X,Y,Z\), \(\beta = \neg X, Y, \neg Z\) and \(\gamma = \neg X, Y, Z\).

\begin{thm}\label{theo:monotone}
If circuit \(\Gamma\) results from filtering by instance \(\alpha\) then every literal \(l\) in \(\Gamma\)
appears in \(\alpha\), and \(\Gamma(\gamma) \geq \Gamma(\beta)\) if \(\gamma \cap \alpha \supseteq \beta \cap \alpha\).
\end{thm}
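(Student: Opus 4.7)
The plan is to handle the two claims separately: the first is immediate from Definition~\ref{def:f-circuit}, and the second reduces to the monotonicity of circuits built only from and- and or-gates over literal and constant inputs.

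For the first claim, filtering replaces every literal $l \notin \alpha$ by the constant $0$, so any literal still present in $\Gamma$ must belong to $\alpha$. This is purely a matter of reading the definition.

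For the second claim, the key observation is that once all literals outside $\alpha$ have been blanked out, each surviving input literal $l$ of $\Gamma$ evaluates under an instance $\beta$ to $1$ iff $l \in \beta$; since $l \in \alpha$, this is the same as $l \in \beta \cap \alpha$. Therefore, if $\gamma \cap \alpha \supseteq \beta \cap \alpha$, every input literal of $\Gamma$ that evaluates to $1$ under $\beta$ also evaluates to $1$ under $\gamma$. I would then proceed by structural induction on $\Gamma$: constant inputs give the same value under $\beta$ and $\gamma$; literal inputs are handled as above; and both and-gates and or-gates are monotone in their inputs, so the inequality propagates to every gate and in particular to the root, giving $\Gamma(\gamma) \geq \Gamma(\beta)$.

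The only point that deserves care is that the internal structure of $\Gamma$ is indeed free of any gate that could break monotonicity. This holds because $\Gamma$ is obtained from a Decision-DNNF circuit by first adding and-gates at each or-gate (consensus, Definition~\ref{def:c-circuit}) and then substituting constants for some input literals (filtering). Both operations preserve the property that the only internal gates are and- and or-gates, with negations appearing only at the leaves as negative literals. Neither operation introduces a gate that could flip a $1$ into a $0$, so monotonicity propagates through the circuit without obstruction. I do not expect a serious obstacle here: the theorem is essentially the statement that filtering turns $\Gamma$ into a positive (monotone) circuit over the literals of $\alpha$, from which both claims drop out.
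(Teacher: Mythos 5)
Your proof is correct and follows essentially the same route as the paper's: both arguments rest on the observation that filtering leaves only literals of $\alpha$, and that an NNF circuit is monotone in its input literals, so going from $\beta$ to $\gamma$ can only turn literals of $\Gamma$ from $0$ to $1$, never from $1$ to $0$. The paper phrases this as a direct change-of-value argument at the leaves ("literals $l_2 \in \beta \setminus \gamma$ are not in $\alpha$ so do not appear in $\Gamma$"), while you make the monotonicity explicit via structural induction; the two are interchangeable in rigor and content.
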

\begin{proof}
Filtering removes every literal not in instance \(\alpha\).
Hence, every literal in the filtered circuit \(\Gamma\) is in \(\alpha\), which implies the next result.

Suppose that \(\gamma \cap \alpha \supseteq \beta \cap \alpha\) and \(\Gamma(\beta)=1\).
When evaluating circuit \(\Gamma\) at \(\gamma\) compared to \(\beta\), the only literals that change values are 
\(l_1 \in \gamma \setminus \beta\) and \(l_2 \in \beta \setminus \gamma\). Literals \(l_1\) change
values from \(0\) to \(1\) and literals \(l_2\) change values from \(1\) to \(0\). Changes to the values of \(l_1\) 
cannot decrease the output of circuit \(\Gamma\) since it is an NNF circuit. Literals \(l_2\) are not in \(\alpha\)
since \(\gamma \cap \alpha \supseteq \beta \cap \alpha\)
so do not appear in circuit \(\Gamma\) and changes to their values do not matter. Hence,
\(\Gamma(\gamma) = 1\).
\end{proof}

\shrink{
If an instance \(\beta\) satisfies a circuit that has been filtered by instance~\(\alpha\), then it is 
because of the property that \(\beta\) shares with \(\alpha\).
\begin{thm}\label{theo:property}
If circuit \(\Gamma\) results from filtering by instance \(\alpha\) and if \(\beta\) is an instance
such that \(\beta \models \Gamma\), then \(\alpha \cap \beta \models \Gamma\).
\end{thm}
\begin{proof}
Suppose \(\beta \models \Gamma\) and let \(\t = \alpha \cap \beta\).
For every instance \(\gamma\) such that \(\gamma \models \t\), we have \(\gamma \cap \alpha \supseteq \beta \cap \alpha\).
Hence, \(\Gamma(\gamma) \geq \Gamma(\beta)\) by Theorem~\ref{theo:monotone} and \(\gamma \models \Gamma\).
Hence \(\t\) is an implicant of circuit \(\Gamma\).
\end{proof}
}

We also need the following result which identifies circuit models that are preserved by filtering due to having applied consensus. 
\begin{proposition}\label{prop:drop}
Consider a Decision-DNNF circuit \(\Delta\) and instance \(\alpha\) such that \(\Delta(\alpha)=1\).
If \(\t\) is an implicant of \(\Delta\) and \(\alpha \models \t\) then \(\t\) is also an implicant of \(\filter(\cons(\Delta),\alpha)\).
\end{proposition}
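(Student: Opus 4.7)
The plan is structural induction on the Decision-DNNF circuit, proving a statement in the form needed for recursion: for every Decision-DNNF $\Gamma$ over variables $V_\Gamma$, every instance $\alpha'$ of $V_\Gamma$, and every implicant $\t$ of $\Gamma$ with $\alpha' \models \t$, the term $\t$ is an implicant of $\filter(\cons(\Gamma), \alpha')$. The precondition $\alpha' \models \Gamma$ required for filtering to be defined comes free, since $\alpha' \models \t \models \Gamma$ and consensus preserves models by Proposition~\ref{prop:consensus}. The main syntactic lever is that $\cons$ and $\filter$ both commute with the top-level gate structure, so $\filter(\cons(\Gamma), \alpha')$ can be unfolded gate by gate against the structure of $\Gamma$.

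The base cases are immediate: for a literal $l$, $\alpha' \models l$ forces $l \in \alpha'$ so filtering preserves $l$, and any implicant of $l$ must contain $l$. For an and-gate $\Gamma = \mu \wedge \nu$, the DNNF property gives disjoint variable sets $V_\mu, V_\nu$, and I split $\t = \t|_{V_\mu} \cup \t|_{V_\nu}$; each restriction must itself be an implicant of the corresponding subcircuit (otherwise one could complete the other side adversarially to violate $\t \models \Gamma$). Applying the induction hypothesis to each subcircuit and conjoining gives the result, since $\filter(\cons(\mu \wedge \nu), \alpha') = \filter(\cons(\mu), \alpha') \wedge \filter(\cons(\nu), \alpha')$.

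The decision-or-gate step $\Gamma = (X \wedge \mu) \vee (\neg X \wedge \nu)$ is the main obstacle, and the step at which the consensus operation earns its keep. Taking without loss of generality $X \in \alpha'$, the literal $\neg X$ is filtered to $0$, so $\filter(\cons(\Gamma), \alpha')$ reduces to $(X \wedge \filter(\cons(\mu), \alpha')) \vee (\filter(\cons(\mu), \alpha') \wedge \filter(\cons(\nu), \alpha'))$, where the second disjunct is precisely the term contributed by consensus. If $X \in \t$, then $\t|_{V_\mu}$ is an implicant of $\mu$ and the induction hypothesis places $\t$ in the first disjunct. The delicate case is $X \notin \t$: by completing $\t$ once with $X=1$ and once with $X=0$, I obtain $\t|_{V_\mu} \models \mu$ and $\t|_{V_\nu} \models \nu$, so applying the induction hypothesis to each subcircuit lands $\t$ in the second disjunct. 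The non-obvious verification here is that the induction hypothesis on $\nu$ requires $\alpha'|_{V_\nu} \models \nu$, which $\alpha'$ does not supply on its own, since it satisfied $\Gamma$ through the $X$-branch rather than the $\neg X$-branch. This precondition is exactly secured by the chain $\alpha'|_{V_\nu} \models \t|_{V_\nu} \models \nu$, and closing this loop is where I expect the argument to require the most care.
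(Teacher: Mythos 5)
Your proof is correct and takes essentially the same route as the paper's: structural induction on the Decision-DNNF circuit, with decomposability handling the and-gates and the added consensus term $\mu\wedge\nu$ absorbing exactly those implicants that omit the decision variable $X$. The differences are presentational only—the paper tracks the whole set of $\alpha$-satisfied implicants at once where you track a single term, and you are somewhat more explicit than the paper about why the induction hypothesis is legitimately applicable to the branch $\nu$ that $\alpha$ does not select.
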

\begin{proof}
\def\imp{{\cal I}}
Let \(\Gamma = \filter(\cons(\Delta),\alpha)\), \(\imp(\Delta) = \{\t: \t \models \Delta\}\) and  
\(\imp(\Delta,\alpha) = \{\t: \t \models \Delta \mbox{ and } \alpha \models \t\}\).
We need to show that \(\imp(\Delta,\alpha) \subseteq \imp(\Gamma)\). 
That is, \(\Gamma\) preserves the implicants \(\t\) of \(\Delta\) that are satisfied by \(\alpha\). 
The proof is by induction on the structure of \(\Delta\).

(Base Case)~If \(\Delta\) is a literal \(l\) or a constant, then \(\Delta = \Gamma\) since 
consensus is not applicable and filtering will not replace literal \(l\) by constant \(0\) (\(l \in \alpha\)
since \(\Delta(\alpha)=1\)). Hence, \(\imp(\Delta,\alpha) \subseteq \imp(\Gamma)\). 

(Inductive Step)~If \(\Delta = \Delta_1 \wedge \Delta_2\) then \(\Gamma= \Gamma_1\wedge \Gamma_2\)
where \(\Gamma_1 = \filter(\cons(\Delta_1),\alpha)\) and \(\Gamma_2= \filter(\cons(\Delta_2),\alpha)\).
Since \(\Delta_1\) and \(\Delta_2\) do not share variables (decomposability), 
\(\imp(\Delta) = \imp(\Delta_1) \times \imp(\Delta_2)\) (Cartesian product). 
Similarly, \(\imp(\Gamma) = \imp(\Gamma_1) \times \imp(\Gamma_2)\).
By the induction hypothesis,
\(\imp(\Delta_1,\alpha) \subseteq \imp(\Gamma_1)\) and \(\imp(\Delta_2,\alpha) \subseteq \imp(\Gamma_2)\).
Hence,  
\[ \imp(\Delta,\alpha) = \imp(\Delta_1,\alpha) \times \imp(\Delta_2,\alpha) \subseteq \imp(\Gamma_1) \times \imp(\Gamma_2) = \imp(\Gamma).\]

(Inductive Step)~If \(\Delta = (l \wedge\Delta_1) \vee (\neg l \wedge \Delta_2)\) 
and literal \(l \in \alpha\) then \(\Gamma = (l\wedge \Gamma_1) \vee (\Gamma_1 \wedge \Gamma_2)\)
where \(\Gamma_1 = \filter(\cons(\Delta_1),\alpha)\) and \(\Gamma_2 = \filter(\cons(\Delta_2),\alpha)\).
Due to decomposability, \(l\) and \(\neg l\) do not appear in \(\Delta_1\) or \(\Delta_2\).
Hence, \(\imp(\Delta) = \imp_1 \cup \imp_2 \cup \imp_c\) where
\begin{eqnarray*}
\imp_1 & = & \{l,\tau: \tau \in \imp(\Delta_1)\} \\
\imp_2 & = & \{\neg l,\tau: \tau \in \imp(\Delta_2)\} \\
\imp_c & = & \imp(\Delta_1 \wedge \Delta_2).
\end{eqnarray*}
Since \(\imp_2 \cap \imp(\Delta,\alpha) = \emptyset\) we have
\[
\imp(\Delta,\alpha) = \{l,\tau: \tau \in \imp(\Delta_1,\alpha)\} \cup  \imp(\Delta_1 \wedge \Delta_2,\alpha).
\]
Moreover,
\(
\imp(\Gamma) = \{l,\t: \t \in \imp(\Gamma_1)\} \cup \imp(\Gamma_1 \wedge \Gamma_2)
\).
By the induction hypothesis, \(\imp(\Delta_1,\alpha) \subseteq \imp(\Gamma_1)\) and \(\imp(\Delta_2,\alpha) \subseteq \imp(\Gamma_2)\),
which gives \(\{l,\tau: \tau \in \imp(\Delta_1,\alpha)\}  \subseteq \{l,\t: \t \in \imp(\Gamma_1)\}\) 
and \(\imp(\Delta_1 \wedge \Delta_2,\alpha) \subseteq \imp(\Gamma_1 \wedge \Gamma_2)\).
Hence, \(\imp(\Delta,\alpha) \subseteq \imp(\Gamma)\).
\end{proof}

The following fundamental result reveals the role of filtering a consensus circuit. It also reveals
our linear-time procedure for computing the complete reason behind a decision as a (tractable)
circuit that compactly characterizes all sufficient reasons.

\begin{thm}\label{theo:filter}
Consider a Decision-DNNF circuit \(\Delta\) and instance \(\alpha\) such that \(\Delta(\alpha)=1\).
Term \(\t\) is a prime implicant of \(\Delta\) and \(\alpha \models \t\) (that is, \(\t\) is a sufficient 
reason for decision \(\Delta(\alpha)\)) iff \(\t\) is a prime implicant of \(\filter(\cons(\Delta),\alpha)\).
\end{thm}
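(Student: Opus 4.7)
The plan is to set $\Gamma = \filter(\cons(\Delta),\alpha)$ and connect implicants of $\Delta$ with implicants of $\Gamma$ via three building blocks already in hand: (i) $\Gamma \models \Delta$, because filtering only replaces literals by $0$ (which can only remove models of an NNF circuit), giving $\Gamma \models \cons(\Delta)$, and $\cons(\Delta) \equiv \Delta$ by Proposition~\ref{prop:consensus}; (ii) Proposition~\ref{prop:drop}, which transfers every implicant of $\Delta$ satisfied by $\alpha$ to an implicant of $\Gamma$; and (iii) Theorem~\ref{theo:monotone}, which says that every literal in $\Gamma$ lies in $\alpha$ and that $\Gamma(\gamma) \geq \Gamma(\beta)$ whenever $\gamma \cap \alpha \supseteq \beta \cap \alpha$.

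For the forward direction, suppose $\t$ is a prime implicant of $\Delta$ with $\alpha \models \t$. Fact (ii) makes $\t$ an implicant of $\Gamma$. If some $\t' \subsetneq \t$ were also an implicant of $\Gamma$, fact (i) would promote it to an implicant of $\Delta$, contradicting primality of $\t$ in $\Delta$; hence $\t$ is prime in $\Gamma$.

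For the backward direction, suppose $\t$ is a prime implicant of $\Gamma$. The crucial step is to show $\alpha \models \t$. Assume toward contradiction that $\t$ contains a literal $l \notin \alpha$, and consider $\t \setminus \{l\}$: for any completion $\mu$ of $\t \setminus \{l\}$, either $\mu$ contains $l$, in which case $\mu$ extends $\t$ and $\Gamma(\mu)=1$; or $\mu$ contains $\neg l$, in which case flipping the variable of $l$ in $\mu$ gives an extension $\mu'$ of $\t$ with $\Gamma(\mu')=1$, and since $\neg l \in \alpha$ and the two assignments agree on all other variables we have $\mu \cap \alpha \supseteq \mu' \cap \alpha$, so the monotonicity clause of Theorem~\ref{theo:monotone} yields $\Gamma(\mu) \geq \Gamma(\mu') = 1$. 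Thus $\t \setminus \{l\}$ is an implicant of $\Gamma$, contradicting primality of $\t$, so $\alpha \models \t$. Now fact~(i) makes $\t$ an implicant of $\Delta$, and if some $\t' \subsetneq \t$ were an implicant of $\Delta$ then $\alpha \models \t \models \t'$ combined with fact~(ii) would make $\t'$ an implicant of $\Gamma$, again contradicting primality.

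The main obstacle is precisely the backward argument that every literal of $\t$ lies in $\alpha$: the bare fact that literals appearing in $\Gamma$ are $\alpha$-aligned does not by itself rule out an $\alpha$-disagreeing literal inside a prime implicant, since its negation may well occur in $\Gamma$. The $\alpha$-monotonicity of $\Gamma$ (the second clause of Theorem~\ref{theo:monotone}) is the extra structure that makes any such disagreeing literal removable, and this is the pivotal use of monotonicity in the proof; everything else reduces to routine bookkeeping with the two implicant-transfer facts (i) and (ii).
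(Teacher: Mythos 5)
Your proof is correct and follows essentially the same route as the paper: the same three ingredients ($\Gamma \models \Delta$, Proposition~\ref{prop:drop}, and the monotonicity of filtered circuits from Theorem~\ref{theo:monotone}), the same forward direction, and the same pivotal use of monotonicity plus primality in $\Gamma$ to force $\alpha \models \t$ in the backward direction. The only cosmetic difference is that you remove $\alpha$-disagreeing literals from $\t$ one at a time, whereas the paper handles them all at once by showing $\alpha \cap \beta$ is an implicant of $\Gamma$ for a suitable completion $\beta$ of $\t$; the two arguments are interchangeable.
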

\begin{proof}
Let \(\Gamma = \filter(\cons(\Delta),\alpha)\) and observe that 
\(\Gamma \models \Delta\) since \(\cons(\Delta) \equiv \Delta\) and \(\Gamma\) is the result 
of replacing some inputs of NNF circuit \(\cons(\Delta)\) with constant \(0\).

Suppose \(\t\) is a prime implicant of circuit \(\Delta\) and \(\alpha \models \t\).
Then \(\t\) is an implicant of circuit \(\Gamma\) by Proposition~\ref{prop:drop}, \(\t \models \Gamma\).
If \(\t\) is not a prime implicant of \(\Gamma\), we must have some term \(\a \subset \t\) such that
\(\a \models \Gamma\). Therefore \(\a \models \Delta\) since \(\Gamma \models \Delta\),
which means that \(\t\) is not a prime implicant of \(\Delta\), a contradiction. Hence,
\(\t\) is a prime implicant of \(\Gamma\). 

Suppose \(\t\) is a prime implicant of circuit \(\Gamma\). Then \(\t\) is an implicant of \(\Delta\)
since \(\Gamma \models \Delta\). We next show that \(\t\) is a prime implicant of \(\Delta\) 
and \(\alpha \models \t\). Let \(\beta\) be an instance
such that \(\beta \models \t\) and \(\beta\) disagrees with \(\alpha\) on all variables outside \(\t\). 
Then \(\Gamma(\beta)=1\) and \(\alpha \cap \beta \subseteq \t\).
Every instance \(\gamma\) such that \(\gamma \models \alpha \cap \beta\) must satisfy 
\(\Gamma(\gamma)=1\) since \(\alpha \cap \gamma \supseteq \alpha \cap \beta\), leading to 
\(\Gamma(\gamma) \geq \Gamma(\beta)\) by Theorem~\ref{theo:monotone}.
Hence, \(\alpha \cap \beta\) is an implicant of \(\Gamma\). Since \(\t\) is a prime 
implicant of \(\Gamma\), we must have \(\alpha \cap \beta = \t\)  and hence \(\alpha \models \t\).
Suppose now \(\t\) is not a prime implicant of \(\Delta\). Some term \(\a \subset \t\) is then
a prime implicant of \(\Delta\) and \(\alpha \models \a\). By the first part of this theorem, 
\(\a\) is a prime implicant of \(\Gamma\), a contradiction. Therefore, \(\t\) is a prime 
implicant of \(\Delta\).
\end{proof}

\begin{definition}[\bf Reason Circuit]
\label{def:reason-c}
For classifier \(\Delta\), instance \(\alpha\) and a Decision-DNNF circuit \(\Gamma\) for \(\Delta_\alpha\),
circuit \(\filter(\cons(\Gamma),\alpha)\) is called a ``reason circuit'' and denoted \(\reason(\Delta,\alpha)\).
\end{definition}
The circuit \(\reason(\Delta,\alpha)\) depends on the specific Decision-DNNF circuit \(\Gamma\) used
to represent \(\Delta_\alpha\) but will always have the same models.

\subsection{Tractability of Reason Circuits}

We next show that reason circuits are tractable. Since we represent the complete
reason for a decision as a reason circuit, many queries relating to the decision can then be answered efficiently.

\begin{definition}[\bf Monotone]
\label{def:montone}
An NNF circuit is monotone if every variable appears only positively or only negatively in the circuit.
\end{definition}
Reason circuits are filtered circuits and hence monotone as shown by Theorem~\ref{theo:monotone}.
The following theorem mirrors what is known on monotone propositional formula, 
but we include it for completeness.

\begin{thm}\label{theo:tractable}
The satisfiability of a monotone NNF circuit can be decided in linear time.
A monotone NNF circuit can be negated and conditioned in linear time to yield a monotone NNF circuit.
\end{thm}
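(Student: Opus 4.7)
The plan is to tackle the three claims separately, leveraging one central observation: in a monotone NNF circuit, the assignment that sets every variable to its unique polarity of occurrence turns every literal appearing in the circuit into the constant \(1\). Call this the \emph{agreeing assignment}. Because and-gates and or-gates are monotone in their inputs, the agreeing assignment maximizes the circuit's output over all assignments.

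For satisfiability, I would evaluate the circuit bottom up under the agreeing assignment (each literal replaced by \(1\)), which runs in time linear in the circuit size, and then conclude that the circuit is satisfiable iff this evaluation returns \(1\). One direction is immediate since the agreeing assignment is itself an assignment; the other follows because any other assignment can only flip some literals from \(1\) to \(0\), and such flips cannot turn a \(0\) output into \(1\) in an NNF circuit built from and/or gates.

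For negation, I would perform the standard De Morgan traversal: swap every and-gate with an or-gate, every or-gate with an and-gate, replace every literal \(l\) with \(\neg l\), and swap the constants \(0\) and \(1\). This runs in linear time, yields an NNF representation of the negation by a straightforward induction on circuit structure, and preserves monotonicity since a variable that appeared only positively now appears only negatively, and vice versa. For conditioning on a term \(\t\), I would walk the leaves and replace each literal in \(\t\) by \(1\) and each literal whose complement is in \(\t\) by \(0\), optionally propagating these constants through the gates to remove them; this is again linear time. Since conditioning only erases literals and never changes the polarity of a surviving literal, monotonicity is preserved.

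The main difficulty, if any, is really only in the satisfiability claim: the negation and conditioning constructions are essentially standard textbook manipulations of NNF circuits, and the only verification needed is that each preserves the single-polarity invariant, which is immediate by inspection. For satisfiability, the subtle step is justifying the agreeing assignment as a maximizer, which reduces to noting that and/or over \(\{0,1\}\) are monotone and that the single-polarity constraint rules out any conflict in the per-variable choice.
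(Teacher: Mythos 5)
Your proof is correct and essentially matches the paper's: evaluating the circuit under your ``agreeing assignment'' is exactly the paper's recursive satisfiability check (literal \(\mapsto 1\), and-gate satisfiable iff all inputs are, or-gate iff some input is), and your negation and conditioning constructions are the same De Morgan swap and constant substitution, with the same observation that polarities are uniformly flipped or erased. The only cosmetic difference is in justifying the and-gate step: you argue the agreeing assignment maximizes the output over all assignments, while the paper argues that models of the two conjuncts can always be merged under the single-polarity constraint --- two equivalent uses of monotonicity.
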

\begin{proof}
The satisfiability of a monotone NNF circuit can be decided using the following procedure.
Constant \(0\) is not satisfiable. Constant \(1\) and literals are satisfiable. An or-gate is satisfiable
iff any of its inputs is satisfiable. An and-gate is satisfiable iff all its inputs are satisfiable. All previous
statements are always correct except the last one which depends on monotonicity. Consider a
conjunction \(\mu \wedge \nu\) and suppose every variable shared between the conjuncts
appears either positively or negatively in both. Any model of \(\mu\) can be combined with any
model of \(\nu\) to form a model for \(\mu \wedge \nu\). Hence, the conjunction is satisfiable
iff each of the conjuncts is satisfiable. 
Conditioning replaces literals by constants so it preserves monotonicity.
To negate a monotone circuit, replace and-gates by or-gates, or-gates by and-gates
and literals by their negations. Monotonicity is preserved.
\end{proof}
Given Theorem~\ref{theo:tractable}, the validity of a monotone NNF circuit can be decided 
in linear time (we check whether the negated circuit is unsatisfiable).\footnote{Validity can be checked more directly as follows.
Constant \(1\) is valid. Constant \(0\) and literals are not valid. An and-gate is valid iff all
its inputs are valid. An or-gate is valid iff any of its inputs is valid. The previous statements
are always correct except the last one which requires monotonicity.} 
We can also conjoin the circuit with a literal in linear time to yield a monotone circuit since 
\(\Delta \wedge l = (\Delta \vert l) \wedge l\).

Variables can be existentially quantified from a monotone circuit in linear time, with the resulting
circuit remaining monotone. This is critical for efficiently detecting decision bias as shown 
by Theorem~\ref{theo:bias-d-exist}.

\begin{thm}\label{theo:exists}
Replacing every literal of variable \(X\) with constant \(1\) in monotone NNF circuit \(\Gamma\)  
yields a circuit equivalent to \(\exists X \Gamma\). 
\end{thm}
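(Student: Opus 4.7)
The plan is to reduce the claim to the defining identity $\exists X \Gamma \equiv (\Gamma|X) \vee (\Gamma|\neg X)$ and then use monotonicity to show that one of the two disjuncts subsumes the other, in such a way that the surviving disjunct equals the circuit obtained by replacing every literal of $X$ with constant $1$.

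First I would dispose of the trivial case in which $X$ does not occur in $\Gamma$: then $\Gamma|X \equiv \Gamma|\neg X \equiv \Gamma$, and replacing every (zero) literal of $X$ by $1$ also leaves $\Gamma$ unchanged, so equivalence is immediate. Next, by the definition of monotonicity, if $X$ does occur then either only the literal $X$ appears in $\Gamma$ (call this the positive case) or only the literal $\neg X$ appears (the negative case).

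The key lemma I would prove — by a routine structural induction on the NNF circuit — is a monotonicity statement: if only the positive literal $X$ occurs in $\Gamma$, then $\Gamma|\neg X \models \Gamma|X$. Base cases (constants and literals of other variables, and the literal $X$ itself) are immediate; the and/or inductive steps follow because NNF gates are monotone in their inputs and the inductive hypothesis gives the needed inequality at each child. This immediately yields $\exists X \Gamma \equiv (\Gamma|X) \vee (\Gamma|\neg X) \equiv \Gamma|X$. Since $\neg X$ does not appear in $\Gamma$ in the positive case, the conditioning $\Gamma|X$ is performed solely by replacing the literal $X$ with $1$, which coincides with ``replacing every literal of $X$ by constant $1$.'' The negative case is symmetric: the lemma gives $\Gamma|X \models \Gamma|\neg X$, so $\exists X \Gamma \equiv \Gamma|\neg X$, and since $X$ itself does not occur, this conditioning is performed by replacing $\neg X$ with $1$, again matching the prescribed operation.

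The only step requiring real work is the monotonicity lemma, but it is essentially the standard fact that a Boolean function expressed in NNF with a variable occurring only positively is monotone increasing in that variable, and the induction is straightforward. Everything else is a matter of splitting into the two monotonicity cases and observing that under monotonicity one of the two conditionings in the definition of $\exists X$ is redundant, leaving the ``replace by $1$'' operation as the correct implementation.
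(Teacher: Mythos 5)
Your proposal is correct and follows essentially the same route as the paper: split on whether $X$ occurs only positively or only negatively, use the monotonicity fact that one of $\Gamma|X$, $\Gamma|\neg X$ entails the other, and conclude that $\exists X\,\Gamma$ collapses to the surviving conditioning, which is exactly the replace-by-$1$ operation. The only difference is that you spell out the structural induction for the monotonicity lemma and the trivial case where $X$ does not occur, both of which the paper leaves implicit.
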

\begin{proof}
If variable \(X\) appears only positively in circuit \(\Gamma\) then \(\Gamma \vert \neg X \models \Gamma \vert X\)
and \(\exists X \; \Gamma = (\Gamma \vert X) \vee (\Gamma \vert \neg X) = \Gamma \vert X\).
If variable \(X\) appears only negatively in \(\Gamma\) then \(\Gamma \vert X \models \Gamma \vert \neg X\) and
\(\exists X \; \Gamma = (\Gamma \vert X) \vee (\Gamma \vert \neg X) = \Gamma \vert \neg X\). 
Variable \(X\) can therefore be existentially quantified by replacing its literals with constant \(1\).
\end{proof}

\subsection{Computing Queries}
\label{sec:queries}

\def\PI{{\sf PI}}
\def\cache{{\sf cache}}
\def\cp{{\sf cartesian\_product}}
\def\rs{{\sf remove\_subsumed}}

\begin{algorithm}[tb]
\small
\caption{\PI(\(\Delta,\alpha\)) \label{alg:PI}}

\noindent {\bf input:} Decision-DNNF circuit \(\Delta\), instance \(\alpha\) (assumes \(\Delta(\alpha)=1\)).

\noindent {\bf output:} Prime implicants of circuit \(\filter(\cons(\Delta),\alpha)\).
\vspace{-.5mm}
\Amain{
\begin{algorithmic}[1]

\IF{\(\cache(\Delta)\) is set}
  \RETURN \(\cache(\Delta)\)
\ELSIF{\(\Delta\) is constant \(0\)}
  \STATE \(r = \{\}\)
\ELSIF{\(\Delta\) is constant \(1\)}
  \STATE \(r = \{\{\}\}\)
 \ELSIF{\(\Delta = \Delta_1 \wedge \Delta_2\)}
  \STATE \(r = \cp(\PI(\Delta_1,\alpha),\PI(\Delta_2,\alpha))\)
 \ELSIF{\(\Delta = (X\wedge \Delta_1) \vee (\neg X \wedge \Delta_2)\)}
  \STATE \((\ell, \Gamma) = (X,\Delta_1) \mbox{ if literal \(X\) in \(\alpha\) else } (\neg X,\Delta_2)\) \label{ln:cf}
  \STATE \(p =  \cp(\PI(\Delta_1,\alpha),\PI(\Delta_2,\alpha))\) \label{ln:cp}
  \STATE \(q = \{\{\ell\} \cup \t \mbox{ for } \t \in \PI(\Gamma,\alpha)\}\)
  \STATE \(r = p \cup q\) \label{ln:s}
\ENDIF
\STATE \(r = \rs(r)\)
\STATE \(\cache(\Delta) = r\)
\RETURN \(r\)
\end{algorithmic}
}
\end{algorithm}

We can now discuss algorithms. To compute the sufficient reasons for a decision \(\Delta(\alpha)\):
get a Decision-DNNF circuit for \(\Delta_\alpha\), transform it into a consensus circuit, filter it 
by instance \(\alpha\) and finally compute the prime implicants of filtered circuit. Algorithm~\ref{alg:PI}
does this in place, that is without explicitly constructing the consensus or filtered circuit. It assumes
a positive decision (otherwise we pass \(\neg \Delta\)).

Algorithm~\ref{alg:PI} uses subroutine \(\cp\) which conjoins two DNFs by computing
the Cartesian product of their terms. It also uses \(\rs\) to remove subsumed terms from a DNF. 

\begin{thm}\label{theo:pi}
Consider a Decision-DNNF \(\Delta\) and instance \(\alpha\). If \(\Delta(\alpha)=1\) then a call
\(\PI(\Delta,\alpha)\) to Algorithm~\ref{alg:PI} returns the prime implicants of circuit \(\filter(\cons(\Delta),\alpha)\).
\end{thm}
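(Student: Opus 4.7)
The plan is to prove the claim by structural induction on the Decision-DNNF circuit $\Delta$, strengthening the inductive hypothesis so that $\PI(\Delta',\alpha)$ returns the prime implicants of $\filter(\cons(\Delta'),\alpha)$ for every subcircuit $\Delta'$ of $\Delta$, regardless of whether $\Delta'(\alpha)=1$ (the filter operation makes sense as literal replacement either way). The base cases are immediate: on the constant $0$ the algorithm returns $\emptyset$, which is the (empty) prime-implicant set of $0$; on the constant $1$ it returns $\{\emptyset\}$, whose single empty term is the unique prime implicant of $1$; and neither $\cons$ nor $\filter$ changes a constant leaf.

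For the and-gate case $\Delta = \Delta_1 \wedge \Delta_2$, both $\cons$ and $\filter$ commute with conjunction, giving $\filter(\cons(\Delta),\alpha) = \Gamma_1 \wedge \Gamma_2$ with $\Gamma_i = \filter(\cons(\Delta_i),\alpha)$. Decomposability forces $\Gamma_1$ and $\Gamma_2$ to be variable-disjoint, so every Cartesian union $\tau_1 \cup \tau_2$ of a prime implicant $\tau_i \in \PI(\Gamma_i)$ is a consistent and prime implicant of $\Gamma_1 \wedge \Gamma_2$, and conversely every prime implicant of $\Gamma_1 \wedge \Gamma_2$ splits uniquely as such a union. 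The induction hypothesis identifies $\PI(\Delta_i,\alpha)$ with $\PI(\Gamma_i)$, so the output of $\cp$ already equals $\PI(\Gamma_1 \wedge \Gamma_2)$ and $\rs$ acts trivially.

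For the decision-node case $\Delta = (X \wedge \Delta_1) \vee (\neg X \wedge \Delta_2)$, take $X \in \alpha$ without loss of generality. Pushing consensus and filtering through the structure yields $\filter(\cons(\Delta),\alpha) = (X \wedge \Gamma_1) \vee (\Gamma_1 \wedge \Gamma_2)$, since $\cons$ adds the term $\Delta_1 \wedge \Delta_2$ and $\filter$ zeroes out the $\neg X$ branch. Decomposability at the surrounding and-gates ensures $X$ does not appear in $\Gamma_1$ or $\Gamma_2$, so $\PI(X \wedge \Gamma_1) = \{\{X\}\cup\tau : \tau \in \PI(\Gamma_1)\}$, which the induction hypothesis shows equals $q$. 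For the other disjunct, any prime implicant of $\Gamma_1 \wedge \Gamma_2$ must contain a prime implicant of each $\Gamma_i$; since the filtered circuit is monotone with all literals from $\alpha$, the union of these two prime implicants is consistent and is itself an implicant, so by primality the prime implicant equals that union, giving $\PI(\Gamma_1 \wedge \Gamma_2) \subseteq \cp(\PI(\Gamma_1),\PI(\Gamma_2)) = p$.

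The main obstacle is showing that $\rs(p \cup q)$ equals the prime implicants of $(X \wedge \Gamma_1) \vee (\Gamma_1 \wedge \Gamma_2)$, and not merely a superset. The key is monotonicity (Theorem~\ref{theo:monotone}): because every literal in the filtered circuit lies in $\alpha$, the minimal model of any prime implicant $\tau$ of the disjunction already satisfies at least one of the two disjuncts, and monotonicity extends this to every model of $\tau$, forcing $\tau$ itself to be an implicant of that disjunct and therefore a prime implicant of it. This gives $\PI(\filter(\cons(\Delta),\alpha)) \subseteq \PI(X\wedge\Gamma_1) \cup \PI(\Gamma_1 \wedge \Gamma_2) \subseteq p \cup q$. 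Every true prime implicant survives $\rs$ since a strictly smaller element of $p \cup q$ would violate primality; every non-prime term in $p \cup q$ strictly contains a smaller implicant of one of the disjuncts, hence a prime implicant of it, which again lies in $p \cup q$ and subsumes the non-prime term. The caching in the algorithm is immaterial to correctness because every subcircuit determines a unique $\filter(\cons(\cdot),\alpha)$.
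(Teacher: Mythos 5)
Your proof is correct, but it is a genuinely more self-contained argument than the one the paper gives. The paper's proof is three sentences: it notes that Lines~\ref{ln:cf}--\ref{ln:cp} implicitly perform consensus and filtering, that filtered circuits are monotone, and then appeals to the textbook fact that the prime implicants of a monotone formula are obtained by expanding it into a DNF and removing subsumed terms. You instead establish the result by structural induction, with the invariant that every recursive call \(\PI(\Delta',\alpha)\) returns exactly the prime-implicant antichain of \(\filter(\cons(\Delta'),\alpha)\), where \(\Gamma_i = \filter(\cons(\Delta_i),\alpha)\). Both arguments rest on the same essential insight, namely the monotonicity of Theorem~\ref{theo:monotone}, which is what guarantees that a prime implicant of \((X\wedge\Gamma_1)\vee(\Gamma_1\wedge\Gamma_2)\) is already a prime implicant of one of the disjuncts, so that subsumption removal alone suffices and no further consensus is needed. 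What your route buys is a verification of several points the paper leaves implicit: that \(p\cup q\) really is a DNF for the filtered consensus circuit at a decision node; that \(\Gamma_1\) and \(\Gamma_2\) may share variables there (unlike at an and-gate), so \(\PI(\Gamma_1\wedge\Gamma_2)\subseteq p\) needs its own primality argument; that interleaving \(\rs\) at every node, rather than once at the end as in the cited textbook procedure, still yields exactly the prime implicants; and that the induction hypothesis must be strengthened to subcircuits with \(\Delta'(\alpha)=0\), which you handle correctly. The only omission is a base case for a bare literal input, but that case is equally absent from the algorithm's own case analysis, so your proof faithfully mirrors the pseudocode.
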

\begin{proof}
Consensus and filtering are applied implicitly on Lines~\ref{ln:cf}-\ref{ln:cp}. Filtered circuit are
monotone. We compute the prime implicants of a monotone circuit by converting 
it into DNF and removing subsumed terms~\cite[Chapter 3]{BooleanFunctions}.
This is precisely what Algorithm~\ref{alg:PI} does.
\end{proof}

Consider a decision \(\Delta(\alpha)\) and its complete reason \(\R = \reason(\Delta,\alpha)\) as
a monotone NNF circuit obtained by consensus then filtering. 
Let \(n\) be the size of circuit \(\R\) and \(m\) be the number of features.
We next show how to compute various queries using circuit~\(\R\).

\vspace{2mm}
\noindent {\bf Sufficient Reasons.}
By Theorems~\ref{theo:the-pi} and~\ref{theo:pi}, the call \(\PI(\Delta_\alpha,\alpha)\) 
to Algorithm~\ref{alg:PI} will return all sufficient reasons for decision \(\Delta(\alpha)\), 
assuming \(\Delta_\alpha\) is a Decision-DNNF circuit. The number of sufficient
reasons can be exponential, but we can actually answer many questions about
them without enumerating them directly as shown below.

\vspace{2mm}
\noindent {\bf Necessary Property.}
By Proposition~\ref{prop:necessary2}, characteristic (literal) \(l\) is necessary for decision \(\Delta(\alpha)\) 
iff \(\R \models l\). This is equivalent to \(\R \vert \neg l\) being unsatisfiable, which can be 
decided in \(O(n)\) time given Theorem~\ref{theo:tractable}. The necessary
property (all necessary characteristics) can then be computed in \(O(n \cdot m)\) time.

\vspace{2mm}
\noindent {\bf Necessary Reason.}
To compute the necessary reason (if any) we compute the necessary property and
check whether it satisfies the complete reason. This can be done in \(O(n \cdot m)\) time.

\vspace{2mm}
\noindent{\bf Because Statements.}
To decide whether decision \(\Delta(\alpha)\) was made ``because \(\t\)'' we check
whether property \(\t\) is the complete reason for the decision (Definition~\ref{def:because}):
\(\t \models \R\) and \(\R \models \t\). We have \(\t \models \R\) iff \((\neg \R) \vert \t\) is unsatisfiable.
Moreover, \(\R \models \t\) iff \(\R \vert \neg l\) is unsatisfiable for every literal \(l\) in \(\t\). 
All of this can be done in \(O(n \cdot \vert \t \vert)\) time.

\vspace{2mm}
\noindent{\bf Even if, Because Statements.}
To decide whether decision \(\Delta(\alpha)\) would stick ``even if \(\na\) because \(\t\)'' we replace
property \(\a\) with \(\na\) in instance \(\alpha\) to yield instance \(\beta\) (Definition~\ref{def:eib}).
We then compute the complete reason for decision \(\Delta(\beta)\) and
check whether it is equivalent to \(\t\). All of this can be done \(O(n \cdot \vert \t \vert)\) time.

\vspace{2mm}
\noindent{\bf Decision Bias.}
To decide whether decision \(\Delta(\alpha)\) is biased we existentially quantify all unprotected
features from circuit \(\R\) and then check the validity of the result (Theorem~\ref{theo:bias-d-exist}).
All of this can be done in \(O(n)\) time given Theorems~\ref{theo:tractable} and~\ref{theo:exists}.

\shrink{
\vspace{2mm}
\noindent{\bf Classifier Bias.}
A classifier is biased iff the reason behind one of its decisions depends on at least one
protected feature (Theorem~\ref{theo:bias-c-cond}). To check whether circuit \(\R\)
depends on protected feature \(X\), that is whether \(\R \vert X \neq \R \vert \neg X\),
we first existentially quantify all other features from \(\R\) to yield monotone NNF
circuit \(\R^\star\). We then check whether \(\R^\star \vert X \neq \R^\star \vert \neg X\).
All of this can be done in \(O(n)\) time.\comment(this is wrong)
}

\section{Another Admissions Classifier}
\label{sec:study}

\begin{figure}[t]
\begin{minipage}[b]{0.4\linewidth}
\centering
\includegraphics[height=0.2\textheight]{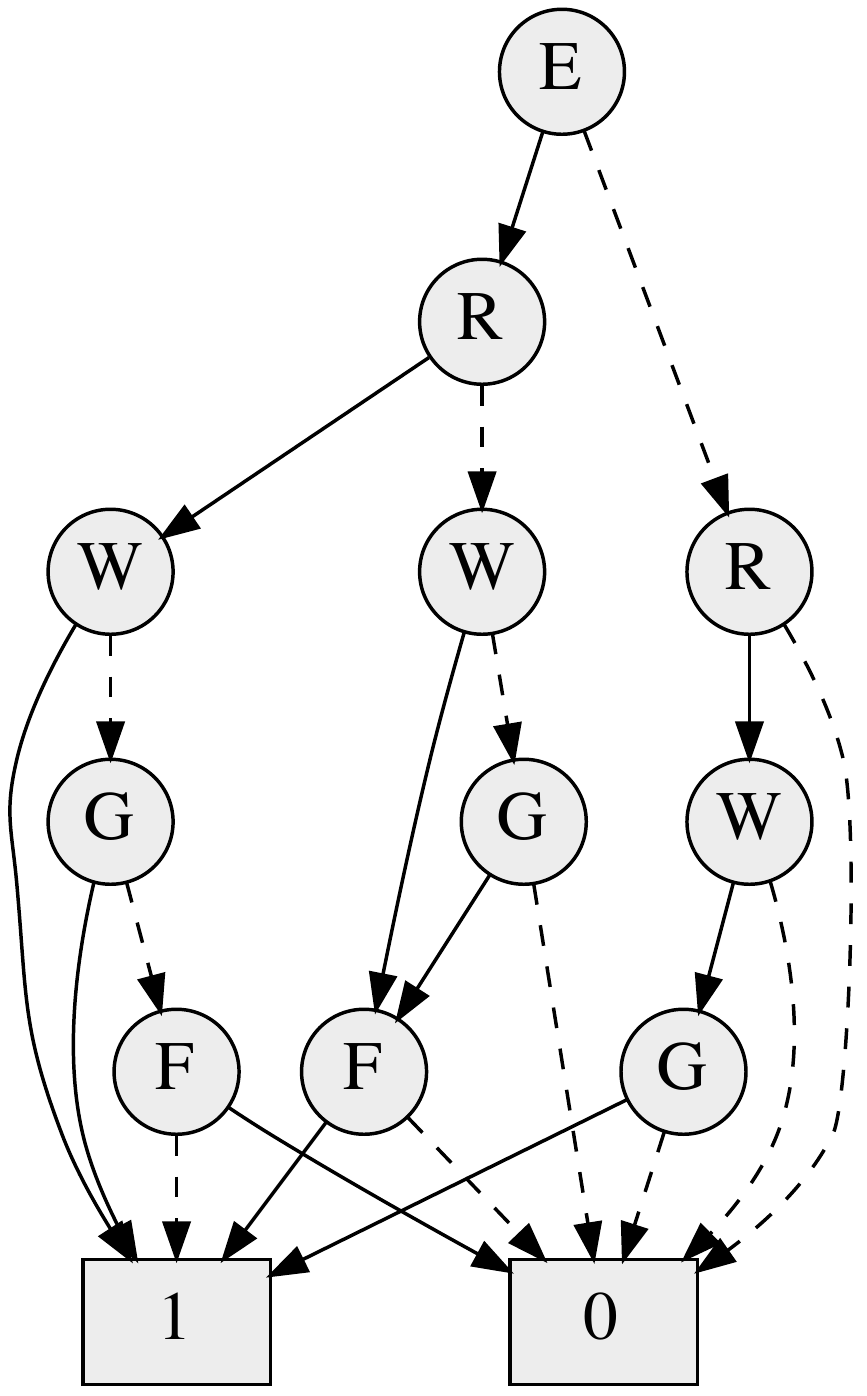}
\caption{Admission classifier.\label{fig:nonmon}}
\end{minipage}
\begin{minipage}[b]{0.4\linewidth}
\begin{footnotesize}
\begin{tabular}{|c||c|c|c|} \hline
Applicant & \rotatebox[origin=c]{270}{Scott} & \rotatebox[origin=c]{270}{ Robin } & \rotatebox[origin=c]{270}{April } \\ \hline
{\bf E}ntrance Exam &  \cmark & \cmark & \cmark \\
{\bf F}irst Time Applicant & \xmark & \cmark & \cmark \\
{\bf G}ood GPA &  \cmark & \cmark & \cmark \\
{\bf W}ork Experience & \cmark & \cmark & \cmark \\ 
{\bf R}ich Hometown & \cmark & \cmark & \xmark \\ \hline
Decision & 1 & 1 & 1 \\ \hline
\end{tabular}
\end{footnotesize}
\caption{Applicants and their characteristics. \label{tab:app}}
\end{minipage}
\vspace{5mm}

\centering
\includegraphics[height=0.22\textheight]{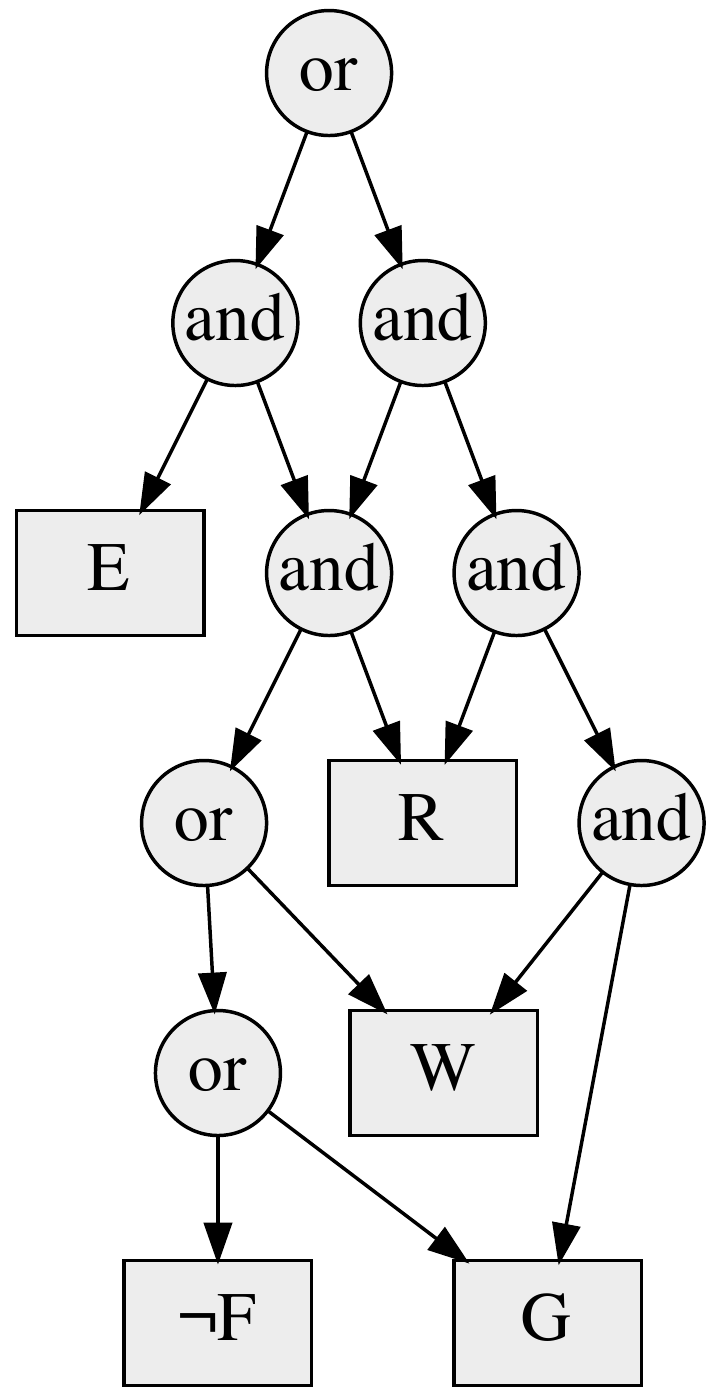}
\hspace{2mm}
\includegraphics[height=0.22\textheight]{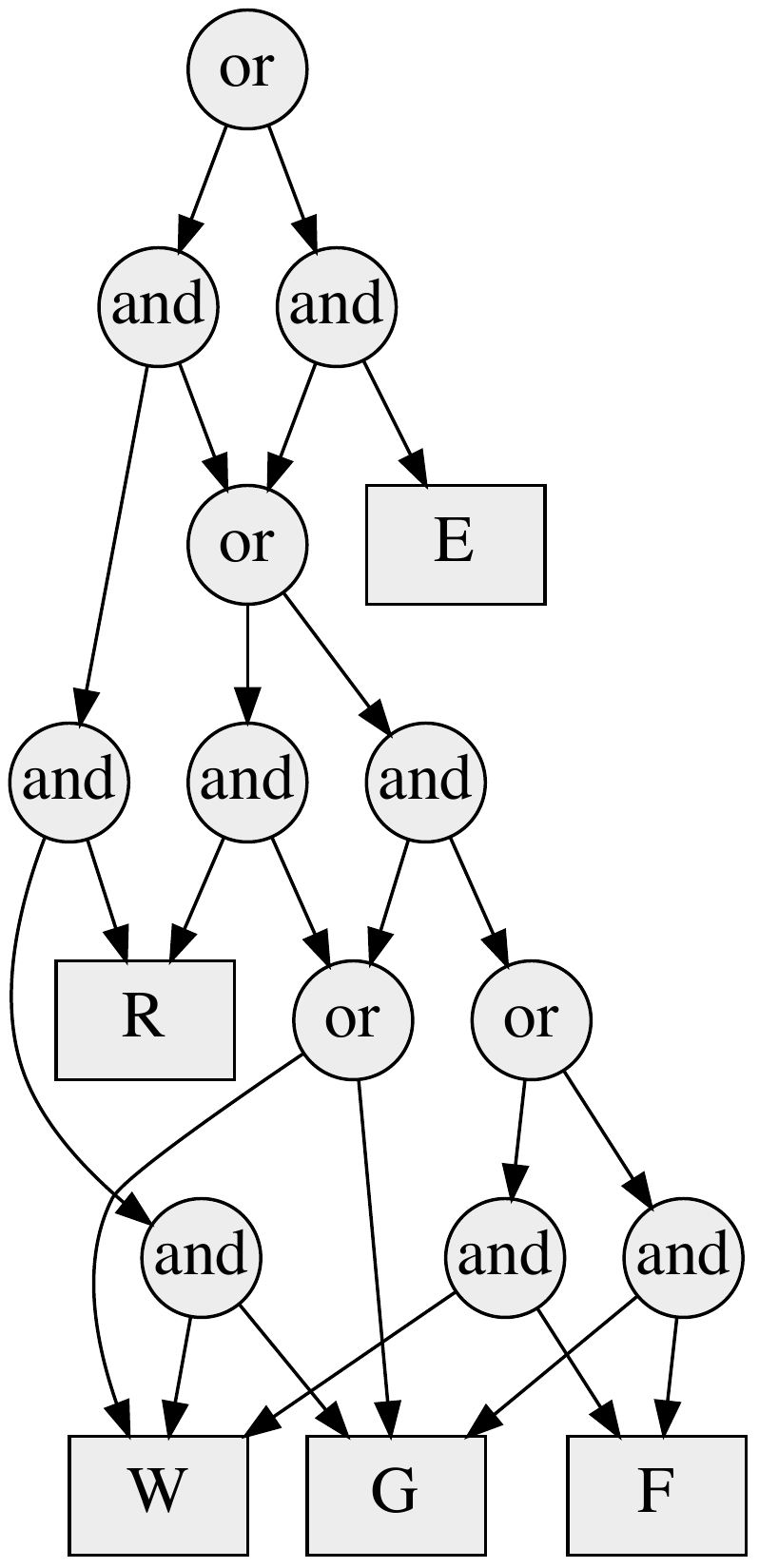}
\hspace{2mm}
\includegraphics[height=0.22\textheight]{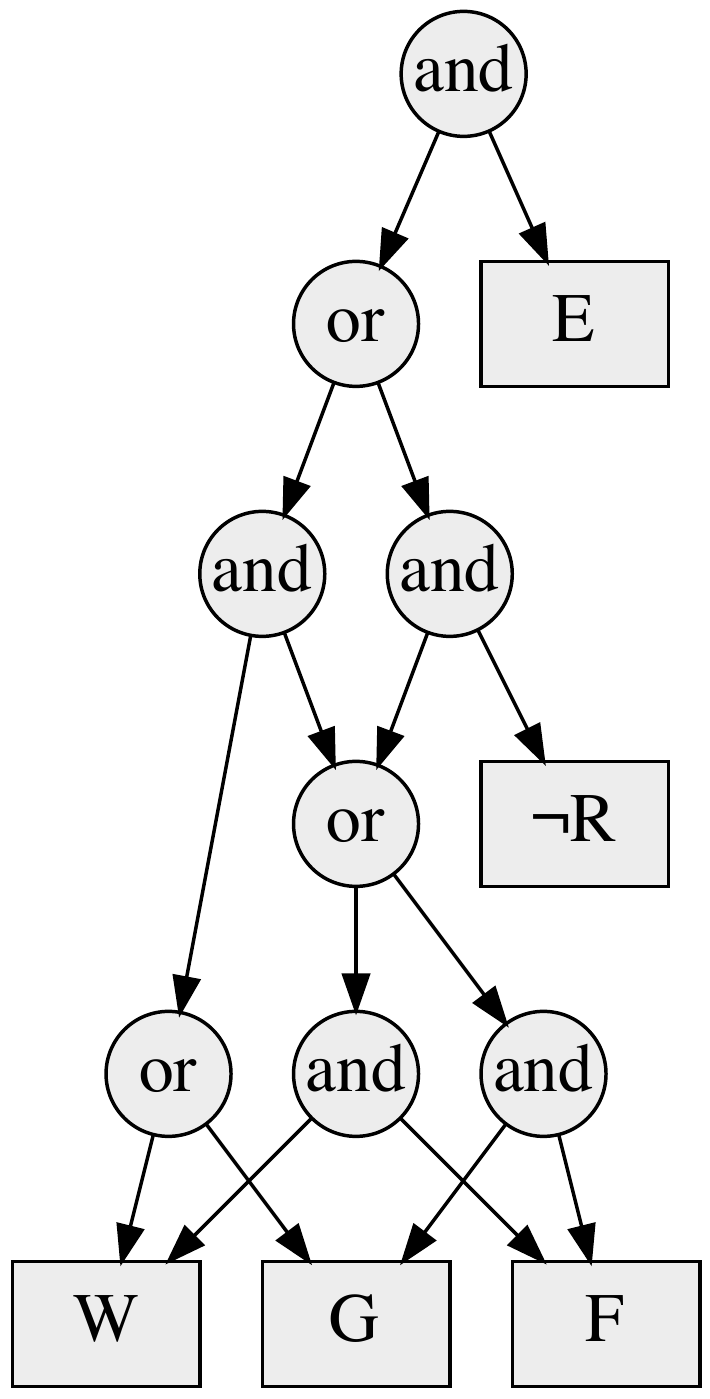}
\caption{From left to right: Reason circuit for the decision on applicants 
Scott, Robin and April (Figure~\ref{tab:app}).
\label{fig:reasons}}
\end{figure}

We now consider a more refined admission classifier to illustrate the notions and concepts we
introduced more comprehensively.

This classifier highly values passing the entrance exam and being a first time applicant. 
However, it also gives significant leeway to students from a rich hometown. In fact, being 
from a rich hometown unlocks the only path to acceptance for those who failed the entrance exam. 
The classifier is depicted as an OBDD in Figure~\ref{fig:nonmon}. 
It corresponds to the following Boolean formula, which is not monotone 
(the previous classifiers we considered were all monotone):
\[
\Delta = [E \wedge [(F \wedge (G \vee W)) \vee (\neg F \wedge R)]] \vee [G \wedge R \wedge W].
\]
The classifier has the following prime implicants, some are not essential (all
prime implicants of a monotone formula are essential):
\[
(E,F,W)(E,F,G)(G,R,W)(E,\neg F,R)(E,R,W)(E,G,R).
\]
We will consider applicants Scott, Robin and April in Figure~\ref{tab:app}, where feature \(R\) is protected (whether 
the applicant comes from a rich hometown). The complete reasons for the
decisions on these applicants are shown in Figure~\ref{fig:reasons}. These are reason circuits
produced as suggested by Definition~\ref{def:reason-c}, except that we simplified the circuits
by propagating and removing constant values (a reason circuit is satisfiable as it must be satisfied by 
the instance underlying the decision).

The decision on applicant Scott is {\em biased.} To check this, we can existentially quantify unprotected features
\(E, F, G, W\) from the reason circuit in Figure~\ref{fig:reasons} and then check its 
validity (Theorem~\ref{theo:bias-d-exist}).
Existential quantification is done by replacing the literals \(E, \neg F, G, W\) in the circuit
with constant \(1\). The resulting circuit is not valid. We can also confirm decision bias
by considering the sufficient reasons for this decision, which all contain the protected 
feature \(R\) (Theorem~\ref{theo:bias-d}):
\[
(E, G, R) \:
(E, R, W)\:
(E, R, \neg F)\:
(G, R, W)
\]
If we flip the protected characteristic \(R\) to \(\neg R\), the decision will flip with
the complete reason being \(\neg F, \neg R\) so
Scott would be denied admission {\em because} he is not a first time applicant
and does not come from a rich hometown (Definition~\ref{def:because}). 

The decision on Robin is {\em not biased.} If we existentially quantify unprotected
features \(E, F, G, W\) from the reason circuit (by replacing their literals
with constant \(1\)), the circuit becomes valid.
We can confirm this by examining the decision's sufficient reasons:
\[
(E, F, G)\:
(E, F,  W)\:
(E, G,  R)\:
(E, R,  W)\:
(G, R, W)
\]
Two of these sufficient reasons do not contain the protected feature so the decision
cannot be biased (Theorem~\ref{theo:bias-d}). The decision will be the same on
any applicant with the same characteristics as Robin except for the protected feature \(R\). However,
since some of the sufficient reasons contain a protected feature, the classifier
must be biased (Theorem~\ref{theo:bias-c}): It will make a biased decision on some
other applicant. This illustrates how classifier bias can be inferred from
the complete reason behind one of its {\em unbiased} decisions. This method is not complete
though: the classifier may still be biased even if no protected feature appears
in a sufficient reason for one of its decisions.

The decision on April is {\em not biased} even though the protected feature \(R\)
appears in the reason circuit (the circuit is valid if we existentially quantify
all features but \(R\)). Moreover, \(E,F\) are all the necessary characteristics
for this decision (i.e., the necessary property). Flipping either of these characteristics
will flip the decision. Recall that violating the necessary property may either flip the decision or change the
reason behind it (Theorem~\ref{theo:necessary}) but flipping only
one necessary characteristic is guaranteed to flip the decision
(Proposition~\ref{prop:necessary1}).

The decision on April would stick {\em even if} she were not to have work experience (\(\neg W\))
{\em because} she passed the entrance exam (\(E\)), has a good GPA (\(G\)) and is a first time applicant (\(F\)). 
April would be denied admission if she were to also violate one of these characteristics (Definition~\ref{def:eib} and Proposition~\ref{prop:necessary1}).

We close this section by an important remark. Even though most of the notions
we defined are based on prime implicants, our proposed theory does not necessarily require the
computation of prime implicants which can be prohibitive. 
Reason circuits characterize all relevant prime implicants and can be obtained in linear time from
Decision-DNNF circuits. Reason circuits are also monotone, allowing one to answer many 
queries about the embedded prime implicants in polytime.
This is a major contribution of this work.

\section{Conclusion}
\label{sec:conclusion}

We introduced a theory for reasoning about the decisions of Boolean classifiers,
which is based on the notions of sufficient, necessary and complete reasons.
We presented applications of the theory to explaining decisions, evaluating 
counterfactual statements about decisions and identifying decision bias and
classifier bias. We also presented polytime and linear-time algorithms for 
computing most of the introduced notions based on the new and tractable class of
reason circuits.

\ack We wish to thank Arthur Choi and Jason Shen for providing valuable feedback.
This work has been partially supported by NSF grant \#ISS-1910317, 
ONR grant \#N00014-18-1-2561, DARPA XAI grant \#N66001-17-2-4032
and a gift from JP Morgan. 
The views in this paper do not necessarily represent those of sponsors.

\bibliography{pi_bibliography}

\end{document}